\date{}
\def\eqref#1{equation~\ref{#1}}
\def\floor#1{\lfloor #1 \rfloor}
\def\1{\bm{1}}
\DeclareMathAlphabet{\mathsfit}{\encodingdefault}{\sfdefault}{m}{sl}
\SetMathAlphabet{\mathsfit}{bold}{\encodingdefault}{\sfdefault}{bx}{n}
\DeclareMathOperator{\vectorized}{vec}
\newtheorem{lemma}{Lemma}
\newtheorem{corollary}{Corollary}
\newtheorem{theorem}{Theorem}
\newcommand{\name}{{\bf DeGNN}\xspace}
\title{
\name:
{\bf Characterizing
and Improving Graph Neural Networks with Graph Decomposition}}
\author{%
   \text{Xupeng Miao}${}^{\dagger *}$ \hspace{0.3em} Nezihe Merve G\"urel${}^{\ddagger}$\thanks{equal contribution} \hspace{0.3em} \text{Wentao Zhang}${}^{\dagger *}$ \hspace{0.3em} \text{Zhichao Han}${}^{\dagger\dagger}$ \hspace{0.3em} \text{Bo Li}${}^{\ddagger\ddagger}$\hspace{0.3em} \text{Wei Min}${}^{\dagger\dagger}$\\ 
  \vspace{-0.7em}
  \text{Xi Rao}${}^{\ddagger}$ \hspace{0.3em} \text{Hansheng Ren}${}^{\mathsection}$ \hspace{0.3em} \text{Yinan Shan}${}^{\dagger\dagger}$ \hspace{0.3em} \text{Yingxia Shao}${}^{\mathparagraph}$ \hspace{0.3em} \text{Yujie Wang}${}^{\dagger}$\hspace{0.3em} \text{Fan Wu}${}^{\ddagger\ddagger}$ \hspace{0.3em} \text{Hui Xue}${}^{\mathsection}$\\ 
   \vspace{0.1em}
  \text{Yaming Yang}${}^{\mathsection}$ \hspace{0.3em} \text{Zitao Zhang}${}^{\dagger\dagger}$ \hspace{0.3em}\text{Yang Zhao}${}^{\dagger\dagger}$ \hspace{0.3em} \text{Shuai Zhang}${}^{\ddagger}$ \hspace{0.3em}\text{Yujing Wang}${}^{\mathsection}$ \hspace{0.3em}\text{Bin Cui}${}^{\dagger}$ \hspace{0.3em}\text{Ce Zhang}${}^{\ddagger}$\\
  \vspace{0.7em}
  ${}^{\dagger}$ \text{Peking University}\hspace{0.3em} ${}^{\ddagger}$\text{ETH Z\"urich}\hspace{0.3em} ${}^{\dagger\dagger}$\text{eBay} \hspace{0.3em} ${}^{\ddagger\ddagger}$\text{University of Illinois at Urbana-Champaign} \\
  \vspace{0.1em}
  ${}^{\mathsection}$\text{Microsoft Research Asia}\hspace{0.3em}${}^{\mathparagraph}$\text{Beijing University of Posts and Telecommunications}
}
\begin{document}
\maketitle
\begin{abstract} 

Despite the wide application of Graph Convolutional Network (GCN), one major limitation is that it does not benefit from the increasing depth and suffers from the oversmoothing problem. In this work, we first characterize this phenomenon from the information-theoretic perspective and show that under certain conditions, the mutual information between the output after $l$ layers and the input of GCN converges to 0 exponentially with respect to $l$. 
We also show that, on the other hand, graph decomposition can potentially weaken the condition of such convergence rate, which enabled our analysis for GraphCNN.
While different graph structures can only benefit from the corresponding decomposition, in practice, we propose an automatic connectivity-aware graph decomposition algorithm, \name, to improve the performance of general graph neural networks.
Extensive experiments on widely adopted benchmark datasets demonstrate that
often \name can not only significantly boost the performance of corresponding GNNs, but also achieves the state-of-the-art performances.

\end{abstract}

\vspace{-0.5em}
\section{Introduction}
\vspace{-0.5em}
Graph Convolutional Network (GCN)~\citep{DBLP:conf/iclr/KipfW17} has attracted intensive interests recently. The GCNs pave a new way to effectively learn representations for graph-structured data and have a wide spectrum of applications including semi-supervised node classification~\citep{DBLP:conf/iclr/KipfW17}, link prediction~\citep{berg2017graph}, recommendation systems~\citep{ying2018graph}, chemical compounds analysis~\citep{such2017robust}, transportation systems~\citep{li2017graph}, etc. Despite its success, one limitation of GCN is that it suffers from performance degradation when it goes deeper. This phenomenon is also identified as the {\em oversmoothing} problem~\cite{DeeperInsight, Asymptotic}: when multiple GCN layers are stacked together, the output will converge to a region that is \textit{independent} of weights and 
inputs, thus
degrades the quality
significantly with respect to the depth.
Integrating 
techniques such 
as 
residual connections (ResGCN) 
and dense
connections
(DenseGCN)
can help accommodate this problem to a certain extend;
however,
this limitation 
remains~\citep{DBLP:conf/iclr/KipfW17}.


It is
also known that
partitioning the graph with a hand-picked structure 
can help a range of
tasks.
For example, 
thinking of an image as a graph, if we
decompose it into multiple subgraphs
(as illustrated 
in
Figure~\ref{fig:gcn}), it is possible
to design a GCN-variant to 
implement 
a standard CNN-like 
model, which
obviously benefits
from going deeper.
GraphCNN~\cite{such2017robust} 
is one such example
of taking advantage of
graph decomposition. However, this 
requires us to know the ``\textit{right}''
decomposition of a graph, which is often
not available in practice.

In this paper, we are inspired by these
observations and results, and ask two questions:
\begin{enumerate}
\item From the \underline{\textit{theoretical}} perspective,
can we explain the significant impact of
graph decomposition on the performance of 
Graph Neural Networks? 
\item From the \underline{\textit{empirical}} perspective,
can we \textit{automatically} decompose a graph 
and improve the 
quality of state-of-the-art Graph Neural Networks?
\end{enumerate}

\begin{figure*}[t]
    \begin{center}
    \includegraphics[width=0.8\textwidth]{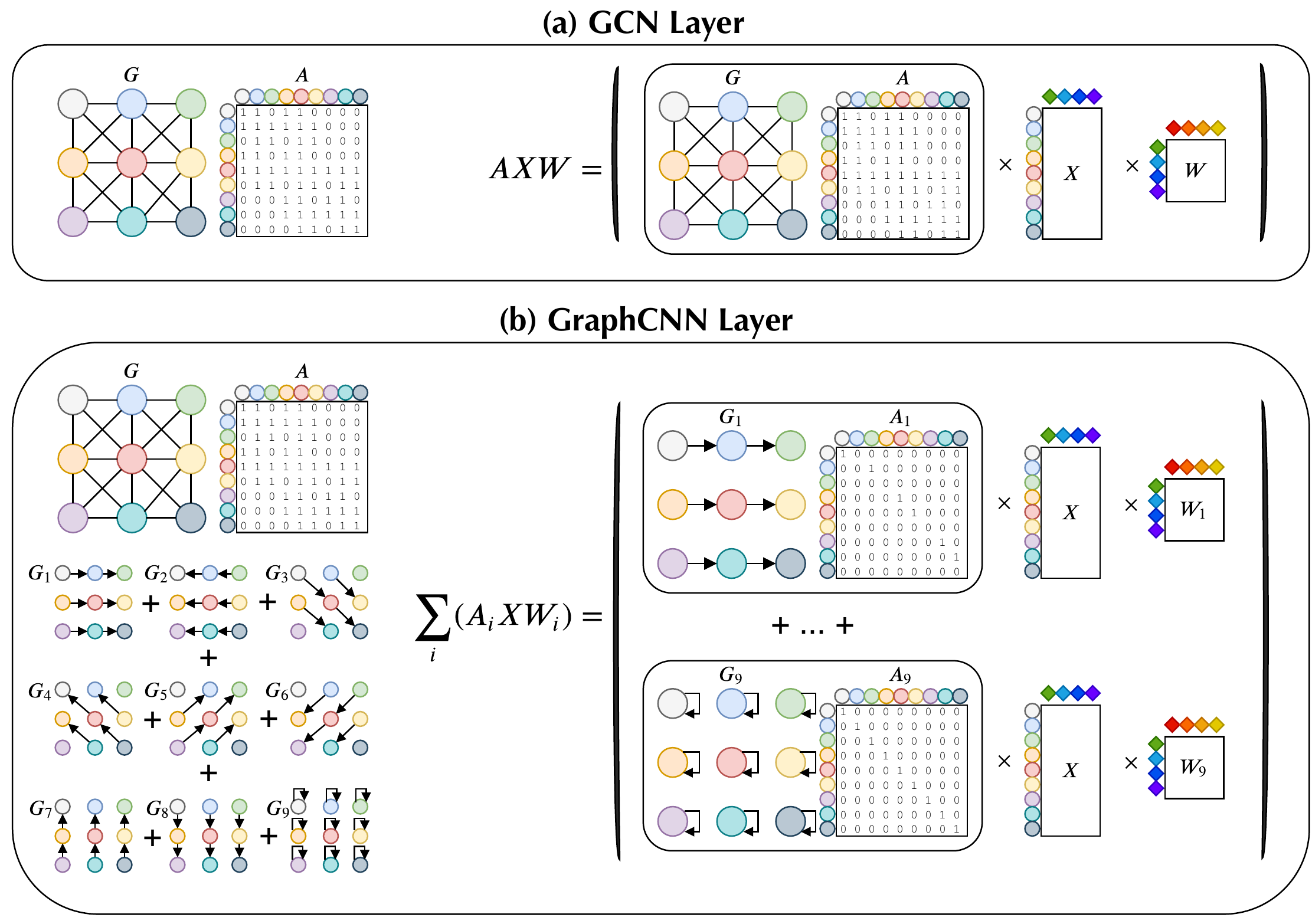}
    \end{center}
    \caption{Illustration of one layer in GCN
    and one layer under one decomposition strategy
    in GraphCNN. ${\bf A}$ is the adjacency matrix,
    ${\bf X}$ is the input, and ${\bf W}$ (${\bf W}_i$)
    are learnable weights. In GraphCNN,
    ${\bf A} = \sum_i {\bf A}_i$ and ${\bf A}_i \cap {\bf A}_j = \emptyset$
    for $i \ne j$.
    In our experiments and analysis, 
    we follow the original normalized ${\bf A}$ in GCN~\cite{DBLP:conf/iclr/KipfW17}.
    }
	\label{fig:gcn}
\end{figure*}

\underline{Our first contribution} is
to take the first step towards the 
theoretical analysis on the impact 
of graph decomposition. We take
an information theoretical view and 
analyze the infinite-sample behaviour of Shannon's {\em mutual information} between the output after $l$ layers and the input,
$\mathcal{I}({\bf x}; {\bf y}^{(l)})$. 
When $\mathcal{I}({\bf x}; {\bf y}^{(l)}) = \mathcal{H}({\bf x})$,  it indicates
that all information
in the input are \textit{fully preserved}
after $l$ layers;
whereas when  $\mathcal{I}({\bf x}; {\bf y}^{(l)}) = 0$, it indicates that \textit{all information
are lost}.
We show that:
\begin{enumerate}
\item (Theorem 1, 2) Under certain conditions (on the singular value
of the graph),  mutual information 
$\mathcal{I}({\bf x}; {\bf y}^{(l)})$
for GCN
converges to 0 exponentially fast with
respect to the depth $l$, corresponding to
the  
oversmoothing problem of GCN
in practice;
\item (Theorem 3, 4) Only under a \textit{much weaker} condition,
the mutual information 
$\mathcal{I}({\bf x}; {\bf y}^{(l)})$
of GraphCNN with decomposition
converges to 0.
\end{enumerate}

The theoretical analysis is non-trivial --- 
in a concurrent work~\cite{Asymptotic}, the authors conducted engaged analysis, from dynamic system perspective, and 
lead to a similar result for GCN (Theorem 1, 2). Our information theoretical perspective not
only provides a much simpler, but equally tight analysis for GCN, but
more importantly, our analysis makes it possible to analyze 
more complex cases for GraphCNN with the presence of decomposition (Theorem 3, 4).

Given the theoretical analysis, one question lingers ---
\textit{can we design practical algorithms to take
advantage of graph decomposition?}
The design of
the decomposition strategy is a
delicate matter.
\underline{Our second contribution}
is a novel
graph connectivity aware decomposition algorithm
to automatically
decompose a graph into multiple subgraphs
and use them to improve the quality of 
Graph Neural Networks. 


We conduct extensive experiments by applying
our decomposition method to GCN~\cite{DBLP:conf/iclr/KipfW17},
JK-Net~\cite{DBLP:conf/icml/XuLTSKJ18}, ResGCN~\cite{DBLP:conf/iclr/KipfW17}, and DenseGCN~\cite{DeeperInsight} --- On all 
these architectures, we show that
our decomposition methods
provide significant improvement. We
then compare
our methods with a range of state-of-the-art
models including GPNN~\cite{DBLP:conf/iclr/LiaoBTGUZ18},
NGCN~\cite{DBLP:conf/uai/Abu-El-HaijaKPL19}, DGCN~\cite{zhuang2018dual}, 
DropEdge~\cite{rong2019dropedge}, LGCN~\cite{gao2018large},
GMI~\cite{DBLP:conf/www/PengHLZRXH20}, and GAT~\cite{velivckovic2017graph}.
We show that with our graph decomposition 
method, simpler models such as 
DenseGCN can often outperform the 
best among these state-of-the-art
models on 12 datasets.

\vspace{-0.5em}
\section{Related Work}
\vspace{-0.5em}

GCN and its variants have achieved promising results on various graph applications, while one limitation of GCN is that its performance would not improve with the increase of network depths. For instance, \cite{DBLP:conf/iclr/KipfW17} show that a two-layer GCN would achieve the best performance on a classic graph dataset while stacking more layers cannot help to improve the performance. Several studies have been conducted \citep{Review, Comprehensive} trying to figure out the reasons behind the depth limitation and provide workarounds. \cite{SGC} hypothesizes that nonlinearity between GCN layers is not critical, which essentially implies that the deep GCN model lacks sufficient expressive ability since it is a linear model. DropEdge~\cite{rong2019dropedge} aims to address the oversmoothing problem by randomly removing some edges from the graph. There is also a rising interest in deepening GCN by utilizing some techniques that are used to build deeper CNN architectures (e.g., ResGCN~\cite{DBLP:conf/iclr/KipfW17}, DenseGCN~\cite{DeepGCN}, JK-Net~\cite{DBLP:conf/icml/XuLTSKJ18}).
However, these lacks of evidence showing whether these techniques are helpful to improve the performance of general GNNs.

To further understand this phenomenon in GCN, 
\cite{DeeperInsight} shows that GCN is a special form of Laplacian smoothing, and they prove that, under certain conditions, by repeatedly applying Laplacian smoothing many times, the features of vertices within each connected component of the graph will converge to the same value. Therefore, the oversmoothing property of GCN will make the features indistinguishable and thus hurt the classification accuracy. \cite{Asymptotic} conducts more engaged theoretical analysis.
The goal of this work is to go beyond 
the analysis of oversmoothing, instead,
we to analyze how graph decomposition
can help 
and propose practical algorithms inspired
by our analysis.

In addition, GMI~\cite{DBLP:conf/www/PengHLZRXH20} proposes
to maximize the correlation between input graphs and high-level hidden representations; and improves the performance on both transductive and inductive tasks.
Compared with these work, we aim to develop the theoretic analysis to explain the information loss in GNNs directly from the information theoretic perspective. In addition, we aim to theoretically show that the decomposition in GraphCNN can help to slow down such information loss, which in turn inspires practical graph decomposition algorithm for general graph-structured data.

\vspace{-0.5em}
\section{Information Loss in Graph Neural Networks}\label{Section: Theoretical Analysis}
\vspace{-0.5em}

Let $\mathcal{G}=(\mathcal{V},\mathcal{E})$ be an undirected graph with a vertex set $v_i \in \mathcal{V}$ and edge set $e_{i, j}\in \mathcal{E}$. We refer to ${v}_i$ as a node, and ${\bf x}_i\in \mathbb{R}^d$ associated with ${v}_i$ as its features. We denote the node feature attributes by ${\bf X}\in \mathbb{R}^{n\times d}$ whose rows are given by ${\bf x}_i$. The adjacency matrix ${\bf A}$ (weighted or binary) is derived as an $n\times n$ matrix with $({\bf A})_{i, j}=e_{i, j}$ if $e_{i, j}\in \mathcal{E}$, and $({\bf A})_{i, j}=0$ elsewhere.

We define the following operator $f:\mathbb{R}^n \rightarrow \mathbb{R}^n$ that is composed of (1) a linear function parameterized by the adjacency matrix ${\bf A}$ and a weight matrix ${\bf W}^{(i+1)}$ at layer $i+1$, and (2) an activation function. Given the input matrix ${\bf X}$, let ${\bf Y}^{(0)}= {\bf X}$. Each layer of the graph neural network maps it to an output vector of the same shape: ${\bf Y}^{(i+1)} = \sigma ({\bf A}{\bf Y}^{(i)}{\bf W}^{(i+1)}).$
In GraphCNN~\cite{such2017robust}, the adjacency matrix ${\bf A} \in \mathbb{R}^{n\times n}$ is decomposed into $K$ additive $n\times n$ matrices such that ${\bf A} = \sum_{k=1}^K{\bf A}_k$. The layer-wise propagation rule becomes: ${\bf Y}^{(i+1)} = \sigma(\sum_{k=1}^K {\bf A}_k{\bf Y}^{(i)}{\bf W}_k^{(i+1)}).$

In this paper, we denote the $j$th singular value of a matrix by $\lambda_j(\cdot)$. We further denote the vectorized input ${\bf X}$ and output after the $l$th layer ${\bf Y}^{(l)}$ by ${\bf x}$ and ${\bf y}^{(l)}$, respectively. For $n$-dimensional real random vectors ${\bf x}$ and ${\bf y}$ defined over finite alphabets $\mathcal{X}^n$ and $\Omega^n$,
we denote entropy of {\bf x} by $\mathcal{H}({\bf x})$, and mutual information between {\bf x} and {\bf y} by $\mathcal{I}({\bf x}; {\bf y})$. In the following analysis, we focus on two measures to investigate the effect of decomposition, that is, \emph{information preservation} $\mathcal{I}({\bf x}; {\bf y}^{(l)})$ and the \emph{information loss} $\mathcal{L}({\bf y}^{(l)})=\mathcal{H}({\bf x}|{\bf y}^{(l)})$ (relative entropy of ${\bf x}$ with respect to ${\bf y}^{(l)}$). We measure the information decay in GNNs at different output layers $l$: lower information loss or larger information preservation indicates more meaningful learned features for GNNs in the infinite-sample regime.

\vspace{-0.5em}
\subsection{Information Loss in GCN}\label{Section: GCN}
\vspace{-0.5em}

In this section, our goal is to investigate the regimes where GCN (1) does not benefit from going deeper, or (2) is guaranteed to preserve all information at its output. We aim to understand this by analyzing the behavior of mutual information between the input and the output of certain network layers at different depths. Due to the space limitation, we relegate all the proofs to the Appendix.

First, we formulate the relationship between input and output layers incorporating the
non-linear activation functions. 
In this paper, we focus on the 
most popular choices, i.e., ReLU,
and leave the study of other 
functions to future work.
The characteristics of the layer-wise propagation rule of GCN leads us to the following result:
\begin{lemma}\label{Lemma: Linearize GCN}
Let $\otimes$ denote the Kronecker product. For GCNs with parametric ReLU activations $\sigma: x \rightarrow \max(x, ax)$ with $a\in (0,1)$, we define ${\bf P}^{(i+1)}$ as a diagonal mask matrix whose nonzero entries are in $\{a, 1\}$ such that $({\bf P}^{(i+1)})_{j, j}=1$ if $\big(({\bf W}^{(i+1)}\otimes {\bf A}){\bf y}^{(i)}\big)_j\geq 0$, and $({\bf P}^{(i+1)})_{j, j}=a$ elsewhere. ${\bf y}^{(l)}$ can be written as
{\small
\begin{equation*}
{\bf y}^{(l)} = {\bf P}^{(l)}({\bf W}^{(l)}\otimes {\bf A}) \cdots {\bf P}^{(2)}({\bf W}^{(2)}\otimes {\bf A}) {\bf P}^{(1)}({\bf W}^{(1)}\otimes {\bf A}) {\bf x}.
\end{equation*}}
\end{lemma}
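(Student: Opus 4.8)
The plan is to unroll the recursion ${\bf Y}^{(i+1)} = \sigma({\bf A}{\bf Y}^{(i)}{\bf W}^{(i+1)})$ one layer at a time, vectorizing each layer and replacing the coordinatewise parametric ReLU by left-multiplication with a diagonal sign-mask. First I would invoke the standard Kronecker/vectorization identity $\operatorname{vec}({\bf A}{\bf M}{\bf B}) = ({\bf B}^{\top}\otimes {\bf A})\operatorname{vec}({\bf M})$ (with the column/row-stacking convention chosen so that the transpose is absorbed into the notation ${\bf W}^{(i+1)}\otimes {\bf A}$ used in the statement). Applying this to the pre-activation matrix ${\bf A}{\bf Y}^{(i)}{\bf W}^{(i+1)}$ and writing ${\bf z}^{(i+1)} := ({\bf W}^{(i+1)}\otimes {\bf A})\,{\bf y}^{(i)}$ for its vectorization, the layer rule becomes ${\bf y}^{(i+1)} = \sigma({\bf z}^{(i+1)})$, where $\sigma$ now acts coordinatewise on an $nd$-dimensional vector.

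The key observation is that a parametric ReLU is, coordinatewise, exact multiplication by a scalar in $\{a,1\}$ determined by the sign of that coordinate: for every $t\in\mathbb{R}$ one has $\max(t,at) = t$ when $t\ge 0$ and $\max(t,at) = at$ when $t<0$, so $\max(t,at) = p\cdot t$ with $p = \mathbbm{1}[t\ge 0] + a\,\mathbbm{1}[t<0]\in\{a,1\}$. Hence $\sigma({\bf z}^{(i+1)}) = {\bf P}^{(i+1)}{\bf z}^{(i+1)}$, where ${\bf P}^{(i+1)}$ is exactly the diagonal matrix in the statement: its $(j,j)$ entry equals $1$ precisely when $({\bf z}^{(i+1)})_j = \big(({\bf W}^{(i+1)}\otimes {\bf A})\,{\bf y}^{(i)}\big)_j \ge 0$ and equals $a$ otherwise. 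Combining the two displays gives the single-layer linearization ${\bf y}^{(i+1)} = {\bf P}^{(i+1)}({\bf W}^{(i+1)}\otimes {\bf A})\,{\bf y}^{(i)}$.

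Finally I would compose these identities for $i = 0,1,\dots,l-1$ with ${\bf y}^{(0)} = {\bf x}$, i.e. a straightforward induction on $l$, to obtain ${\bf y}^{(l)} = {\bf P}^{(l)}({\bf W}^{(l)}\otimes {\bf A})\cdots {\bf P}^{(1)}({\bf W}^{(1)}\otimes {\bf A})\,{\bf x}$, which is the claim. There is no genuine obstacle here; the only things to be careful about are bookkeeping and interpretation. On bookkeeping, one must pin down the vectorization convention so that the Kronecker factor is ${\bf W}^{(i+1)}\otimes {\bf A}$ rather than $({\bf W}^{(i+1)})^{\top}\otimes {\bf A}$. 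On interpretation, each ${\bf P}^{(i+1)}$ is data-dependent — it is a function of ${\bf x}$ through ${\bf y}^{(i)}$ — so the factorization holds pointwise for each fixed input, not as an identity of fixed linear maps; this is precisely the dependence that the subsequent singular-value arguments (Theorems 1--4) will have to control.
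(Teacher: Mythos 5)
Your proposal is correct and follows essentially the same route as the paper: vectorize the layer rule via $\vectorized({\bf A}{\bf M}{\bf B}) = ({\bf B}^{T}\otimes {\bf A})\vectorized({\bf M})$, absorb the coordinatewise parametric ReLU into the diagonal mask ${\bf P}^{(i+1)}$, and recurse. You even flag the same transpose/convention point that the paper handles by explicitly dropping the transpose on ${\bf W}^{(i+1)}$ (harmless since only singular values matter downstream), and your remark that ${\bf P}^{(i+1)}$ is input-dependent is a correct and worthwhile observation.
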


Following our earlier discussion, we will now state our first result which characterizes the regime in which the information propagated across the graph neural network layers exponentially decays to 0.
\begin{theorem}\label{Theorem: GCN information loss} 
Suppose $\sigma_{\bf A} = \max_{j}\lambda_j({\bf A})$ and $\sigma_{\bf W} = \sup_{i\in \mathbb{N}^+} \max_{j} \lambda_j({\bf W}^{(i)})$. If $\sigma_{\bf A} \sigma_{\bf W} <1$, then  $\mathcal{I}({\bf x}; {\bf y}^{(l)})=\mathcal{O}\big((\sigma_{\bf A} \sigma_{\bf W})^l\big)$, and hence $\lim_{l\rightarrow{\infty}}\mathcal{I}({\bf x}; {\bf y}^{(l)})=0$.
\end{theorem}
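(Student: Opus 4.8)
The plan is to combine the linearization from Lemma~\ref{Lemma: Linearize GCN} with an operator-norm contraction estimate, and then convert the resulting $\ell_2$ shrinkage of ${\bf y}^{(l)}$ into a bound on the cardinality of its support, which controls the output entropy and hence the mutual information.

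First I would invoke Lemma~\ref{Lemma: Linearize GCN} to write, for every realization of ${\bf x}$, ${\bf y}^{(l)} = M^{(l)}{\bf x}$ with $M^{(l)} = {\bf P}^{(l)}({\bf W}^{(l)}\otimes{\bf A})\cdots{\bf P}^{(1)}({\bf W}^{(1)}\otimes{\bf A})$. A point to treat carefully is that $M^{(l)}$ itself depends on ${\bf x}$ through the masks ${\bf P}^{(i)}$, so ${\bf y}^{(l)}$ is \emph{not} a linear function of ${\bf x}$; nevertheless the operator-norm bound below is uniform over all realizations. Since each ${\bf P}^{(i)}$ is diagonal with entries in $\{a,1\}\subset(0,1]$ we have $\|{\bf P}^{(i)}\|_{2}\le 1$, and since the singular values of a Kronecker product are the pairwise products of the factors' singular values, $\|{\bf W}^{(i)}\otimes{\bf A}\|_{2} = \lambda_{\max}({\bf W}^{(i)})\,\lambda_{\max}({\bf A}) \le \sigma_{\bf W}\sigma_{\bf A}$. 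Submultiplicativity of the operator norm then gives $\|M^{(l)}\|_{2} \le (\sigma_{\bf A}\sigma_{\bf W})^{l}$, hence $\|{\bf y}^{(l)}\|_{2} \le (\sigma_{\bf A}\sigma_{\bf W})^{l}\,\|{\bf x}\|_{2} \le (\sigma_{\bf A}\sigma_{\bf W})^{l}\,R$, where $R:=\max_{{\bf x}\in\mathcal{X}^{n}}\|{\bf x}\|_{2}<\infty$ by finiteness of the alphabet.

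Next I would use that ${\bf y}^{(l)}$ takes values in the finite set $\Omega^{n}$. Letting $\delta>0$ be the minimum Euclidean distance between distinct points of $\Omega^{n}$, a standard volumetric packing argument (disjoint radius-$\delta/2$ balls around the points, all contained in a slightly enlarged ball) shows that any ball of radius $r$ contains at most $(1+2r/\delta)^{n}$ points of $\Omega^{n}$. Combined with the previous step, $\mathrm{supp}({\bf y}^{(l)})$ has cardinality at most $\big(1+\tfrac{2R}{\delta}(\sigma_{\bf A}\sigma_{\bf W})^{l}\big)^{n}$. Since mutual information is at most the output entropy, which is at most the logarithm of the support size,
\[
\mathcal{I}({\bf x};{\bf y}^{(l)}) \;\le\; \mathcal{H}({\bf y}^{(l)}) \;\le\; n\log\!\Big(1+\tfrac{2R}{\delta}(\sigma_{\bf A}\sigma_{\bf W})^{l}\Big).
\]
Because $\sigma_{\bf A}\sigma_{\bf W}<1$ the argument of the logarithm converges to $1$, and using $\log(1+t)\le t$ yields $\mathcal{I}({\bf x};{\bf y}^{(l)}) \le \tfrac{2nR}{\delta}(\sigma_{\bf A}\sigma_{\bf W})^{l} = \mathcal{O}\big((\sigma_{\bf A}\sigma_{\bf W})^{l}\big)$, which in particular gives $\lim_{l\to\infty}\mathcal{I}({\bf x};{\bf y}^{(l)})=0$.

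The main obstacle I anticipate is not the contraction estimate --- which is essentially bookkeeping with operator norms and the Kronecker singular-value identity --- but the passage from ``the outputs are squeezed into a shrinking $\ell_2$ ball'' to ``the outputs carry vanishingly little information.'' This step genuinely relies on the discrete/finite-alphabet hypothesis: an injective linear contraction loses no information whatsoever, so one must quantify how the discreteness of $\Omega^{n}$ forces the support to collapse, and it is precisely the packing bound that converts the geometric shrinkage into the geometric information-decay rate $(\sigma_{\bf A}\sigma_{\bf W})^{l}$ rather than merely $\mathcal{I}\to 0$. A secondary technical care point is the data-dependence of the masks ${\bf P}^{(i)}$: the norm bound must be invoked pointwise in ${\bf x}$, not by pretending $M^{(l)}$ is a fixed linear operator.
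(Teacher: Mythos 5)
Your operator-norm half coincides exactly with the paper's argument: the paper likewise uses $\|{\bf P}^{(i)}\|_2\le 1$, the Kronecker singular-value identity to get $\sup_i\max_j\lambda_j({\bf P}^{(i)}({\bf W}^{(i)}\otimes{\bf A}))\le\sigma_{\bf A}\sigma_{\bf W}$, and submultiplicativity to bound the top singular value of the $l$-fold product by $(\sigma_{\bf A}\sigma_{\bf W})^l$. Where you genuinely diverge is the second half. The paper passes from ``top singular value $\le(\sigma_{\bf A}\sigma_{\bf W})^l$'' to ``$\mathcal{I}({\bf x};{\bf y}^{(l)})=\mathcal{O}((\sigma_{\bf A}\sigma_{\bf W})^l)$'' by invoking its Lemma~\ref{Lemma: MI General}, which only establishes $\mathcal{I}({\bf x};{\bf y}^{(l)})\le\mathcal{H}({\bf x})$ with equality \emph{iff} the product matrix is invertible; since that matrix is generically invertible at every finite $l$, the lemma by itself does not yield any quantitative decay, and the paper leaves this passage implicit. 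Your packing argument supplies exactly the missing step: it uses the finite, uniformly separated output alphabet $\Omega^{n}$ to show the support of ${\bf y}^{(l)}$ collapses at rate $(\sigma_{\bf A}\sigma_{\bf W})^l$, which then bounds $\mathcal{H}({\bf y}^{(l)})$ and hence the mutual information. You also correctly flag the data-dependence of the masks, which the paper does not discuss. Two minor caveats: (i) your argument needs $\Omega^{n}$ (and hence $\delta$) to be fixed independently of $l$ --- if the output alphabet were taken to be the exact image of $\mathcal{X}^{n}$ under the network map, the minimum separation could shrink with $l$ and the theorem itself would fail for injective maps; this is the charitable reading of the paper's setup and worth stating explicitly as a hypothesis; (ii) the vectorized output lives in $\mathbb{R}^{nd}$ rather than $\mathbb{R}^{n}$, so the exponent in your packing bound should be $nd$, which does not affect the conclusion.
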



This shows that under certain conditions the information after $l$ GCN layers with (parametric) ReLUs asymptotically converges to 0 exponentially fast.
Interestingly, there are also regimes in which
GCN will perfectly preserve 
the information, stated 
as follows:
\begin{theorem}\label{Theorem: GCN no information loss} 
Following Theorem~\ref{Theorem: GCN information loss}, let  $\gamma_{\bf A} = \min_{j}\lambda_j({\bf A})$ and $\gamma_{\bf W} = \inf_{i\in \mathbb{N}^+} \min_{j} \lambda_j({\bf W}^{(i)})$. If $a\gamma_{\bf A} \gamma_{\bf W} \geq1$, then $\forall l \in \mathbb{N}^+$ the information loss $\mathcal{L}( {\bf y}^{(l)})= 0$. 
\end{theorem}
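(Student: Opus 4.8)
The plan is to prove that, under the hypothesis, the network map $\Phi\colon{\bf x}\mapsto{\bf y}^{(l)}$ is a \emph{bijection} of the ambient Euclidean space, so that ${\bf x}$ is recoverable from ${\bf y}^{(l)}$. Since ${\bf x}$ ranges over a finite alphabet, this immediately forces $\mathcal{L}({\bf y}^{(l)})=\mathcal{H}({\bf x}\mid{\bf y}^{(l)})=0$, because conditional Shannon entropy of a discrete variable vanishes exactly when it is a deterministic function of the conditioning variable. So the whole proof reduces to establishing injectivity (in fact invertibility) of $\Phi$, using the parametric-ReLU setup inherited from Theorem~\ref{Theorem: GCN information loss}.

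First I would check that the linear parts are invertible. Since $a\in(0,1)$ and $a\gamma_{\bf A}\gamma_{\bf W}\ge 1$, we must have $\gamma_{\bf A}>0$ and $\gamma_{\bf W}>0$; as $\lambda_j(\cdot)$ denotes singular values, this says ${\bf A}$ and every ${\bf W}^{(i)}$ are nonsingular. Writing $\sigma_{\min}(\cdot)$ for the smallest singular value and using that the singular values of a Kronecker product are the pairwise products of those of its factors, $\sigma_{\min}({\bf W}^{(i)}\otimes{\bf A})\ge\gamma_{\bf W}\gamma_{\bf A}>0$, so each ${\bf W}^{(i)}\otimes{\bf A}$ is invertible. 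One can phrase this quantitatively through Lemma~\ref{Lemma: Linearize GCN}: the diagonal entries of ${\bf P}^{(i)}$ lie in $\{a,1\}$, hence $\sigma_{\min}({\bf P}^{(i)})\ge a$, and submultiplicativity $\sigma_{\min}({\bf B}{\bf C})\ge\sigma_{\min}({\bf B})\sigma_{\min}({\bf C})$ gives $\sigma_{\min}\!\big({\bf P}^{(l)}({\bf W}^{(l)}\otimes{\bf A})\cdots{\bf P}^{(1)}({\bf W}^{(1)}\otimes{\bf A})\big)\ge(a\gamma_{\bf A}\gamma_{\bf W})^{l}\ge 1$, so each linear piece of the piecewise-linear map $\Phi$ has smallest singular value at least $1$ and is in particular invertible.

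Next I would use the specific structure of the activation. For $a\in(0,1)$ the scalar map $\sigma(t)=\max(t,at)$ equals $t$ on $t\ge 0$ and $at$ on $t<0$, hence is a strictly increasing continuous bijection of $\mathbb{R}$ with inverse $\sigma^{-1}(s)=s$ for $s\ge 0$ and $\sigma^{-1}(s)=s/a$ for $s<0$; applied coordinatewise, $\sigma$ is a bijection of Euclidean space. Composing, each layer ${\bf y}^{(i)}\mapsto\sigma\big(({\bf W}^{(i+1)}\otimes{\bf A}){\bf y}^{(i)}\big)$ is a linear bijection followed by a coordinatewise bijection, hence a bijection, and therefore $\Phi$ is a bijection; explicitly $\Phi^{-1}$ is obtained by alternately applying $\sigma^{-1}$ coordinatewise and $({\bf W}^{(i)}\otimes{\bf A})^{-1}$, peeling layers from the output down to ${\bf y}^{(0)}={\bf x}$. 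Thus ${\bf x}=\Phi^{-1}({\bf y}^{(l)})$, and the reduction in the first paragraph completes the proof.

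The only genuinely delicate point is that a plain-ReLU ($a=0$) network need not be injective, so the argument must use the leaky/parametric activation in an essential way: the singular-value bound of the second paragraph by itself only gives injectivity on each individual linear region, and two distinct regions could in principle have overlapping images. It is precisely the coordinatewise invertibility of $\sigma$ (valid only because $a>0$) that stitches the regions together into a global bijection. Everything else — the Kronecker singular-value identity, submultiplicativity of $\sigma_{\min}$, and the fact that $\mathcal{H}({\bf x}\mid{\bf y})=0$ iff ${\bf x}$ is a function of ${\bf y}$ on finite alphabets — is routine, and the $\ge 1$ form of the hypothesis is simply the natural counterpoint to the $<1$ condition of Theorem~\ref{Theorem: GCN information loss} (for the invertibility argument, $\gamma_{\bf A},\gamma_{\bf W}>0$ already suffices).
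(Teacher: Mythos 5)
Your proof is correct, and it reaches the same destination as the paper's — invertibility of the input-to-output map forces $\mathcal{I}({\bf x};{\bf y}^{(l)})=\mathcal{H}({\bf x})$, hence $\mathcal{L}({\bf y}^{(l)})=0$ — but your route through it is cleaner on one genuinely delicate point. The paper argues via Lemma~\ref{Lemma: MI General}: it takes the SVD of the composed matrix ${\bf P}^{(l)}({\bf W}^{(l)}\otimes{\bf A})\cdots{\bf P}^{(1)}({\bf W}^{(1)}\otimes{\bf A})$, lower-bounds its minimum singular value by $(a\gamma_{\bf A}\gamma_{\bf W})^l\geq 1$, and concludes that $\bf \Lambda$ is invertible so the mutual information is maximal. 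This treats the composed matrix as a fixed linear operator, whereas the mask matrices ${\bf P}^{(i)}$ depend on the input ${\bf x}$, so the map is only piecewise linear; invertibility of each linear piece does not by itself rule out two pieces having overlapping images. Your argument closes exactly this gap by observing that the parametric ReLU with $a>0$ is a coordinatewise bijection of $\mathbb{R}^n$, so each layer is a bijection composed with a bijection and the global map $\Phi$ is invertible outright — no case analysis over activation patterns needed. Your closing observation is also correct: for the discrete-entropy conclusion at any fixed $l$, only $\gamma_{\bf A},\gamma_{\bf W}>0$ (i.e., nonsingularity) is actually used, so the hypothesis $a\gamma_{\bf A}\gamma_{\bf W}\geq 1$ is stronger than necessary; it is there to mirror the $<1$ condition of Theorem~\ref{Theorem: GCN information loss} and to make the singular-value bound uniform in $l$.
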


\paragraph{Effect of Normalized Laplacian:}
The results obtained above holds for any adjacency matrix ${\bf A}\in \mathbb{R}^{n\times n}$. The unnormalized ${\bf A}$, however, comes with a major drawback as changing the scaling of feature vectors. To overcome this problem, ${\bf A}$ is often normalized such that its rows sum to one. We then adopt our results to GCN with normalized Laplacian whose largest singular value is one, and obtain the following results.
\begin{corollary}\label{Corollary: GCN Laplacian}
Let ${\bf D}$ denote the degree matrix such that $({\bf D})_{j, j}= \sum_m ({\bf A})_{j, m}$, and ${\bf L}$ be the associated normalized Laplacian ${\bf L}={\bf D}^{-1/2}{\bf A}{\bf D}^{-1/2}$. Suppose GCN uses the following mapping ${\bf Y}^{(i+1)}=\sigma({\bf L}{\bf Y}^{(i)}{\bf W}^{(i)})$ and $\sigma_{\bf W} = \sup_{i} \max_{j} \lambda_j({\bf W}^{(i+1)})$. If $\sigma_{\bf W} <1$, then $\mathcal{I}({\bf x}; {\bf y}^{(l)})=\mathcal{O}\big({\sigma^l_{\bf W}}\big)$, and hence $\lim_{l\rightarrow{\infty}}\mathcal{I}({\bf x}; {\bf y}^{(l)})=0$. 
\end{corollary}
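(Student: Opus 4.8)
The plan is to obtain Corollary~\ref{Corollary: GCN Laplacian} as an immediate specialization of Theorem~\ref{Theorem: GCN information loss}. The crucial observation is that Theorem~\ref{Theorem: GCN information loss} (together with Lemma~\ref{Lemma: Linearize GCN}) was established for an \emph{arbitrary} $n\times n$ matrix in the role of ${\bf A}$ --- nothing in the argument used that ${\bf A}$ is a binary or weighted adjacency matrix, only that it is a fixed linear operator feeding the layer-wise rule ${\bf Y}^{(i+1)} = \sigma({\bf A}{\bf Y}^{(i)}{\bf W}^{(i+1)})$. Hence I would simply instantiate Lemma~\ref{Lemma: Linearize GCN} and Theorem~\ref{Theorem: GCN information loss} with ${\bf A}$ replaced by the symmetric normalized Laplacian ${\bf L}={\bf D}^{-1/2}{\bf A}{\bf D}^{-1/2}$. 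The whole conclusion then carries over once we pin down $\sigma_{\bf L}=\max_j\lambda_j({\bf L})$.

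So the only genuine step is the spectral fact $\sigma_{\bf L}=1$. Since ${\bf L}$ is symmetric, its singular values are the absolute values of its (real) eigenvalues, so $\sigma_{\bf L}=\rho({\bf L})$, the spectral radius. Conjugating by ${\bf D}^{1/2}$ shows ${\bf L}$ is similar to the random-walk matrix ${\bf P}_{\mathrm{rw}}={\bf D}^{-1}{\bf A}$, which by the definition $({\bf D})_{j,j}=\sum_m({\bf A})_{j,m}$ is row-stochastic; a stochastic matrix has spectral radius $1$, attained at the all-ones right eigenvector, so $\rho({\bf P}_{\mathrm{rw}})=1$ and hence $\rho({\bf L})=1$. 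Equivalently, ${\bf D}^{1/2}\bm{1}$ is an eigenvector of ${\bf L}$ with eigenvalue $1$ because ${\bf A}\bm{1}={\bf D}\bm{1}$, while no eigenvalue can exceed $1$ in absolute value. Thus $\sigma_{\bf L}=1$.

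Substituting ${\bf A}\mapsto{\bf L}$ into Theorem~\ref{Theorem: GCN information loss}, its hypothesis $\sigma_{\bf A}\sigma_{\bf W}<1$ becomes $\sigma_{\bf L}\sigma_{\bf W}=\sigma_{\bf W}<1$, which is exactly the assumption of the corollary. The conclusion then reads $\mathcal{I}({\bf x};{\bf y}^{(l)})=\mathcal{O}\big((\sigma_{\bf L}\sigma_{\bf W})^l\big)=\mathcal{O}\big(\sigma^l_{\bf W}\big)$, and since $\sigma_{\bf W}<1$ this tends to $0$ as $l\to\infty$, completing the proof.

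I do not anticipate a real obstacle. The two things to be careful about are: (i) confirming that the proof of Theorem~\ref{Theorem: GCN information loss} never uses structural properties of ${\bf A}$ beyond its being a fixed square matrix, so the substitution is legitimate; and (ii) matching the convention for ${\bf A}$ --- if ${\bf A}$ is taken with self-loops as in~\cite{DBLP:conf/iclr/KipfW17}, one uses $\widetilde{\bf A}={\bf A}+{\bf I}$ and the identity $\widetilde{\bf A}\bm{1}=\widetilde{\bf D}\bm{1}$ still yields eigenvalue $1$, so the argument is unchanged. If a fully self-contained bound $\sigma_{\bf L}\le 1$ were preferred, one could instead apply a Gershgorin/Schur-type estimate to ${\bf D}^{-1}{\bf A}$, but the similarity-to-a-stochastic-matrix route is the cleanest.
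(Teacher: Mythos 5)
Your proposal is correct and follows essentially the same route as the paper: the paper's proof simply notes that $\max_j\lambda_j({\bf L})=1$ and substitutes $\sigma_{\bf A}=1$ into Theorem~\ref{Theorem: GCN information loss}. You additionally supply the justification for that spectral fact (similarity of ${\bf L}$ to the row-stochastic matrix ${\bf D}^{-1}{\bf A}$), which the paper asserts without proof.
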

%
This indicates that with the standard normalized adjacency matrix, the mutual information between the input and the output of $l$th layer of GCN will decay to 0 exponentially fast.

\vspace{-0.5em}
\subsection{Information Loss in GraphCNN}
\vspace{-0.5em}

Motivated by the graph decomposition strategy adopted by several work including GraphCNN, in this section we aim to analyze the information loss after graph decomposition, and understand whether the information can be preserved by aggregating local sub-graphs.
In particular, we take the GraphCNN as as an example which sums the decomposed graphs together as the adjacency matrix to perform the analysis.

Similarly as in Lemma~\ref{Lemma: Linearize GCN}, 
${\bf y}^{(l)}$ can be reduced to
${\bf y}^{(l)}={\bf P}^{(l)}\sum_{k_{l}=1}^K ({\bf W}_{k_{l}}^{(l)}\otimes {\bf A}_{k_{l}})\cdots ({\bf W}_{k_{2}}^{(2)}\otimes {\bf A}_{k_{2}})({\bf W}_{k_{1}}^{(1)}\otimes {\bf A}_{k_{1}}){\bf x}$ for a diagonal mask matrix ${\bf P}^{(i+1)}$ such that $({\bf P}^{(i+1)})_{j, j}=1$ if $\sum_{k_{i+1}=1}^K ({\bf W}_{k_{i+1}}^{(i+1)}\otimes {\bf A}_{k_{i+1}}){\bf y}^{(i)}\geq 0$, and $({\bf P}^{(i+1)})_{j, j}=a$ otherwise.

Following a similar proof for GCN, we obtain the following result for GraphCNN:
\begin{theorem}\label{Theorem: GCNN Information Loss General}
Let $\sigma^{(i)}$ denotes the maximum singular value of \  ${\bf P}^{(i)}\sum_{k_{i}=1}^K ({\bf W}_{k_{i}}^{(i)}\otimes {\bf A}_{k_{i}})$ such that $\sigma^{(i)} = \max_j \lambda_j\big({\bf P}^{(i)}\sum_{k_{i}} ({\bf W}_{k_{i}}^{(i)}\otimes {\bf A}_{k_{i}})\big)$. If $\sup_{i\in \mathbb{N}^+} \sigma^{(i)} <1$, then $\mathcal{I}({\bf x}; {\bf y}^{(l)})=\mathcal{O}\big((\sup_{i\in \mathbb{N}^+} \sigma^{(i)})^l\big)$, and hence $\lim_{l\rightarrow \infty} \mathcal{I}({\bf x}; {\bf y}^{(l)})=0$.
\end{theorem}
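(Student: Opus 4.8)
The plan is to mirror the proof of Theorem~\ref{Theorem: GCN information loss}, replacing the single linear-plus-mask factor of each GCN layer by the decomposed factor $\mathbf{M}^{(i)} := \mathbf{P}^{(i)}\sum_{k_i=1}^{K}(\mathbf{W}_{k_i}^{(i)}\otimes\mathbf{A}_{k_i})$, whose operator $2$-norm is exactly $\sigma^{(i)}$. Using the GraphCNN reduction stated just before the theorem (the analogue of Lemma~\ref{Lemma: Linearize GCN}), for any fixed realization of the input $\mathbf{x}$ the output is the composition $\mathbf{y}^{(l)} = \mathbf{M}^{(l)}\mathbf{M}^{(l-1)}\cdots\mathbf{M}^{(1)}\mathbf{x}$. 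Note this map is only \emph{piecewise} linear in $\mathbf{x}$, since each $\mathbf{P}^{(i)}$ depends on $\mathbf{y}^{(i-1)}$; but it will suffice that the norm bound below holds for every such realization.

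First I would establish a deterministic contraction. Each $\mathbf{P}^{(i)}$ is diagonal with entries in $\{a,1\}\subset(0,1]$, so $\|\mathbf{P}^{(i)}\|_2\le 1$ and therefore $\|\mathbf{M}^{(i)}\|_2 = \sigma^{(i)} \le \bar\sigma := \sup_{i\in\mathbb{N}^+}\sigma^{(i)} < 1$. Submultiplicativity of the operator norm then gives $\|\mathbf{y}^{(l)}\|_2 \le \bar\sigma^{\,l}\,\|\mathbf{x}\|_2$. Since $\mathbf{x}$ ranges over the finite alphabet $\mathcal{X}^n$, set $R := \max_{\mathbf{x}\in\mathcal{X}^n}\|\mathbf{x}\|_2 < \infty$; then every possible value of $\mathbf{y}^{(l)}$ lies in the ball of radius $\rho_l := R\,\bar\sigma^{\,l}$ about the origin.

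Next I would turn this shrinking support into a mutual-information bound. Because $\mathbf{y}^{(l)}$ is a deterministic function of $\mathbf{x}$, $\mathcal{I}(\mathbf{x};\mathbf{y}^{(l)}) = \mathcal{H}(\mathbf{y}^{(l)}) - \mathcal{H}(\mathbf{y}^{(l)}\mid\mathbf{x}) = \mathcal{H}(\mathbf{y}^{(l)})$. Writing $\delta$ for the minimum distance between distinct symbols of $\Omega$, each coordinate of $\mathbf{y}^{(l)}$ lies in $\Omega\cap[-\rho_l,\rho_l]$, a set of at most $1+2\rho_l/\delta$ points; by subadditivity of entropy and $\log(1+x)\le x$,
\[
\mathcal{I}(\mathbf{x};\mathbf{y}^{(l)}) = \mathcal{H}(\mathbf{y}^{(l)}) \le \sum_{j=1}^{n}\mathcal{H}\big((\mathbf{y}^{(l)})_j\big) \le n\log\!\Big(1+\tfrac{2\rho_l}{\delta}\Big) \le \tfrac{2nR}{\delta}\,\bar\sigma^{\,l} = \mathcal{O}\big((\sup_{i}\sigma^{(i)})^{l}\big),
\]
and in particular $\lim_{l\to\infty}\mathcal{I}(\mathbf{x};\mathbf{y}^{(l)})=0$.

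The one genuinely delicate point is the interpretation of $\sup_i\sigma^{(i)}$ and the role of the data-dependent masks: strictly speaking one should either take the supremum over layers \emph{and} over the finitely many admissible diagonal sign-patterns, or simply observe that $\|\mathbf{P}^{(i)}\mathbf{B}\|_2\le\|\mathbf{B}\|_2$, so the contraction factor may be replaced by the mask-free quantity $\sup_i\|\sum_k \mathbf{W}_k^{(i)}\otimes\mathbf{A}_k\|_2$; either way the per-layer estimate $\|\mathbf{M}^{(i)}\|_2\le\bar\sigma$ holds for every realization, which is all the remaining steps use. Everything else is a routine repackaging of the GCN argument, and the only place the decomposition is felt is that $\|\sum_k\mathbf{W}_k^{(i)}\otimes\mathbf{A}_k\|_2$ — hence $\bar\sigma$ — can be substantially smaller than the undecomposed product $\sigma_{\mathbf{A}}\sigma_{\mathbf{W}}$ when the $\mathbf{A}_k$ have disjoint supports, which is precisely why this hypothesis is weaker than the one in Theorem~\ref{Theorem: GCN information loss}.
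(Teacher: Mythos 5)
Your proof is correct and follows the same skeleton as the paper's: linearize each GraphCNN layer as ${\bf M}^{(i)}={\bf P}^{(i)}\sum_{k}({\bf W}^{(i)}_{k}\otimes{\bf A}_{k})$, bound the operator norm of the $l$-fold composition by $(\sup_i\sigma^{(i)})^l$ via submultiplicativity, and conclude. The genuine difference is in the final, and only delicate, step. The paper appeals to Lemma~\ref{Lemma: MI General}, which only gives $\mathcal{I}({\bf x};{\bf y}^{(l)})\le\mathcal{H}({\bf x})$ together with the assertion that $\mathcal{I}=0$ \emph{iff} the top singular value of the composed map vanishes ``in the asymptotic regime''; it never quantifies how a small-but-nonzero singular value controls the mutual information of \emph{discrete} variables, which is what the claimed $\mathcal{O}(\bar\sigma^l)$ rate requires. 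You supply exactly that missing piece by exploiting the finite alphabets: the output is confined to a ball of radius $R\bar\sigma^l$, a $\delta$-separated alphabet admits at most $1+2R\bar\sigma^l/\delta$ values per coordinate, and subadditivity of entropy plus $\log(1+x)\le x$ gives the exponential rate directly. So your argument is the more complete of the two at the one point where completeness matters. Your remark on the data-dependence of ${\bf P}^{(i)}$ (hence of $\sigma^{(i)}$) is also well taken: the paper silently treats $\sigma^{(i)}$ as a fixed constant, and either of your fixes --- taking the supremum over the finitely many admissible mask patterns, or discarding the mask via $\|{\bf P}^{(i)}{\bf B}\|_2\le\|{\bf B}\|_2$ --- resolves the ambiguity without weakening the conclusion. (A trivial slip: the vectorized output lives in $\mathbb{R}^{nd}$, so the entropy sum should run over $nd$ coordinates, which only changes the constant in the $\mathcal{O}(\cdot)$.)
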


Theorem~\ref{Theorem: GCNN Information Loss General} describes the condition on the layer-wise weight matrices ${\bf W}_k$ where GraphCNN fails in capturing the feature characteristics at its output in the asymptotic regime. We then state the second result for GraphCNN which ensures the information loss $\mathcal{L}({\bf y}^{(l)})=0$ as follows.
\begin{sloppypar}
\begin{theorem}\label{Theorem: GCNN Information No Loss General}
Consider the propagation rule of GraphCNN.
Let $\gamma^{(i)}$ denotes the minimum singular value of \  ${\bf P}^{(i)}\sum_{k_{i}=1}^K ({\bf W}_{k_{i}}^{(i)}\otimes {\bf A}_{k_{i}})$ such that $\gamma^{(i)} = \min_j \lambda_j\big({\bf P}^{(i)}\sum_{k_{i}=1}^K ({\bf W}_{k_{i}}^{(i)}\otimes {\bf A}_{k_{i}})\big)$. If $\inf_i \gamma^{(i)} \geq 1$, then $\forall l\in \mathbb{N}^+$ we have $\mathcal{L}({\bf y}^{(l)})=0$.
\end{theorem}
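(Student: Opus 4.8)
The idea is that under the hypothesis the forward map $\mathbf{x}\mapsto\mathbf{y}^{(l)}$ is injective, so $\mathbf{x}$ is a deterministic function of $\mathbf{y}^{(l)}$; for random vectors over finite alphabets this forces $\mathcal{L}(\mathbf{y}^{(l)})=\mathcal{H}(\mathbf{x}\mid\mathbf{y}^{(l)})=0$. Thus the whole argument reduces to establishing injectivity of each layer map.

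First I would work layer by layer rather than with the single linearized matrix product, since the mask $\mathbf{P}^{(i+1)}$ depends on $\mathbf{y}^{(i)}$ (hence on $\mathbf{x}$) and is not a fixed matrix. Write the $(i{+}1)$-st layer as $\phi_{i+1}\colon \mathbf{y}^{(i)}\mapsto \mathbf{y}^{(i+1)}=\sigma\big(\mathbf{L}^{(i+1)}\mathbf{y}^{(i)}\big)$ with $\mathbf{L}^{(i+1)}:=\sum_{k}(\mathbf{W}^{(i+1)}_{k}\otimes\mathbf{A}_{k})$ and $\sigma$ applied entrywise. The parametric ReLU $\sigma(t)=\max(t,at)$ with $a\in(0,1)$ equals $t$ for $t\ge 0$ and $at$ for $t<0$, hence is a strictly increasing bijection of $\mathbb{R}$ with inverse $y\mapsto y$ for $y\ge 0$ and $y\mapsto y/a$ for $y<0$; so the entrywise $\sigma$ is a bijection, and $\phi_{i+1}$ is injective if and only if $\mathbf{L}^{(i+1)}$ is injective.

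Next I would cash in the singular-value hypothesis. By the linearization stated just before Theorem~\ref{Theorem: GCNN Information Loss General}, for each $i$ and each realization of the mask the matrix $\mathbf{P}^{(i)}\mathbf{L}^{(i)}$ has smallest singular value $\gamma^{(i)}\ge \inf_i\gamma^{(i)}\ge 1>0$ and is therefore injective; since $\mathbf{P}^{(i)}$ is diagonal with entries in $\{a,1\}\subset(0,\infty)$ it is invertible, so $\ker\mathbf{L}^{(i)}\subseteq\ker(\mathbf{P}^{(i)}\mathbf{L}^{(i)})=\{\mathbf{0}\}$ and $\mathbf{L}^{(i)}$ is injective, whence so is $\phi_i$. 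Composing, $\Phi_l:=\phi_l\circ\cdots\circ\phi_1$ is injective, and since $\mathbf{y}^{(l)}=\Phi_l(\mathbf{x})$ the inverse $\Phi_l^{-1}$ restricted to the (finite) image recovers $\mathbf{x}$ from $\mathbf{y}^{(l)}$. Hence $\mathbf{x}$ is a deterministic function of $\mathbf{y}^{(l)}$ and $\mathcal{H}(\mathbf{x}\mid\mathbf{y}^{(l)})=0$, as claimed. (Only $\gamma^{(i)}>0$ is actually used; the ``$\ge 1$'' threshold is a convenient sufficient condition that parallels the ``$<1$'' threshold of Theorem~\ref{Theorem: GCNN Information Loss General}, and for GraphCNN it is implied by a Kronecker-product bound such as $a\gamma_{\mathbf{A}}\gamma_{\mathbf{W}}\ge 1$ in the spirit of Theorem~\ref{Theorem: GCN no information loss}.)

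The main obstacle is the bookkeeping around the data-dependent masks: the hypothesis on $\gamma^{(i)}$ must be read as holding uniformly over the finitely many sign patterns $\mathbf{P}^{(i)}$ that can occur, and the injectivity argument must be routed through the nonlinear maps $\phi_i$ rather than through a single fixed product $\mathbf{M}^{(l)}\cdots\mathbf{M}^{(1)}$. The only other point worth spelling out is the reduction of $\mathcal{L}(\mathbf{y}^{(l)})=0$ to the statement that $\mathbf{x}$ is a deterministic function of $\mathbf{y}^{(l)}$ for finite alphabets — routine, but it is exactly where injectivity of $\Phi_l$ is consumed.
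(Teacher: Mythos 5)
Your proof is correct, but it takes a more elementary route than the paper's. The paper proves this theorem by invoking its Lemma~2: it writes the linearized forward map as a single product ${\bf P}^{(l)}\sum_{k_l}({\bf W}_{k_l}^{(l)}\otimes {\bf A}_{k_l})\cdots{\bf P}^{(1)}\sum_{k_1}({\bf W}_{k_1}^{(1)}\otimes {\bf A}_{k_1})$, takes its singular value decomposition ${\bf U}{\bf \Lambda}{\bf V}^T$, and argues $\mathcal{I}({\bf x};{\bf y}^{(l)})=\mathcal{I}(\tilde{\bf x};{\bf \Lambda}\tilde{\bf x})=\mathcal{H}({\bf x})$ exactly when ${\bf \Lambda}$ is invertible, which the hypothesis $\inf_i\gamma^{(i)}\ge 1$ guarantees via $\min_j\lambda_j(\text{product})\ge\prod_i\gamma^{(i)}\ge 1$. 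You instead avoid the global SVD entirely: you observe that the entrywise parametric ReLU is a bijection of $\mathbb{R}$, reduce injectivity of each layer to injectivity of $\sum_k({\bf W}_k^{(i)}\otimes{\bf A}_k)$, extract that injectivity from the hypothesis by peeling off the invertible diagonal mask, and conclude $\mathcal{H}({\bf x}\mid{\bf y}^{(l)})=0$ because ${\bf x}$ is a deterministic function of ${\bf y}^{(l)}$ over a finite alphabet. Both arguments consume the hypothesis in the same way (it forces invertibility of the forward map), but your layer-by-layer formulation has a real advantage: the paper's Lemma~2 treats ${\bf U},{\bf \Lambda},{\bf V}$ as fixed even though the masks ${\bf P}^{(i)}$ depend on ${\bf x}$, so the map is only piecewise linear; your version routes the injectivity argument through the nonlinear maps $\phi_i$ and is therefore insensitive to that data dependence. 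You also correctly note that only $\gamma^{(i)}>0$ is needed, so your argument is slightly more general than the stated threshold. The one thing the paper's approach buys that yours does not is uniformity of machinery: Lemma~2 serves all four theorems at once (both the decay upper bounds and the preservation lower bounds), whereas your bijectivity argument only yields the preservation direction.
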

\end{sloppypar}

{\bf Proof Sketch.} Following Lemma~\ref{Lemma: Linearize GCN}, the key step in proving above theorems is as follows. Consider the singular value decomposition {\small ${\bf U}{\bf \Lambda}{\bf V}^T={\bf P}^{(l)}({\bf W}^{(l)}\otimes {\bf A}) ... {\bf P}^{(2)}({\bf W}^{(2)}\otimes {\bf A}) {\bf P}^{(1)}({\bf W}^{(1)}\otimes {\bf A})$} such that {\small $({\bf \Lambda})_{j,j}=\lambda_j({\bf P}^{(l)}({\bf W}^{(l)}\otimes {\bf A}) ... {\bf P}^{(2)}({\bf W}^{(2)}\otimes {\bf A}) {\bf P}^{(1)}({\bf W}^{(1)}\otimes {\bf A}))$}, and let $\tilde{\bf x}={\bf V}^T{\bf x}$. We have
\begin{equation}
    \begin{split}
        \mathcal{I}({\bf x}; {\bf y}^{(l)}) \stackrel{(1)}{=} \mathcal{I}(\tilde{\bf x}; \Lambda\tilde{\bf x}) &\stackrel{(2)}{\leq}\mathcal{H}(\tilde{\bf x}) \stackrel{(3)}{=}\mathcal{H}({\bf x})
    \end{split}
\end{equation}
where (1, 3) results from that ${\bf U}$ and ${\bf V}$ are invertible, and equality holds in (2) \textit{iff} ${\bf \Lambda}$ is invertible, i.e., singular values of {\small ${\bf P}^{(l)}({\bf W}^{(l)}\otimes {\bf A}) ... {\bf P}^{(2)}({\bf W}^{(2)}\otimes {\bf A}) {\bf P}^{(1)}({\bf W}^{(1)}\otimes {\bf A})$} are nonzero. Theorems~\ref{Theorem: GCN information loss}, \ref{Theorem: GCN no information loss}, \ref{Theorem: GCNN Information Loss General} and \ref{Theorem: GCNN Information No Loss General} can be inferred from Lemma~\ref{Lemma: MI General}. That is, $\mathcal{I}({\bf x}; {\bf y}^{(l)})=0$ \textit{iff} $\max_j ({\bf \Lambda}^l)_{j, j} =0$ in the asymptotic regime. Similarly, \textit{iff} $\min_j ({\bf \Lambda}^l)_{j, j} >0$, $\mathcal{I}({\bf x}; {\bf y}^{(l)})$ is maximized and given by $\mathcal{H}({\bf x})$, hence  $\mathcal{L}({\bf y}^{(l)})=0$ .

In order to understand the role of decomposition in GraphCNN, we revisit the conditions on full information loss ($\mathcal{I}({\bf x}; {\bf y}^{(l)})=0$) and full information preservation ($\mathcal{L}({\bf y}^{(l)})=0$) for a specific choice of decomposition, which will be used to demonstrate the information processing capability.

\begin{corollary}\label{Corollary: GCNN information loss decompA}
Suppose the singular value decomposition of ${\bf A}$ is given by ${\bf A}={\bf U}_{\bf A}{\bf S V}_{\bf A}^T$, and each ${\bf A}_k$ is set to ${\bf A}_k = {\bf U}_{\bf A}{\bf S}_k{\bf V}_{\bf A}^T$ where $({\bf S}_k)_{m, m} = \lambda_m({\bf A})$ if $k=m$ and $({\bf S}_k)_{m, m} = 0$ elsewhere. We then have the following results:
For $\sigma_{{\bf A}_k} = \lambda_k({\bf A})$ and $\sigma_{{\bf W}_k}=\sup_{i\in \mathbb{N}^+}\max_j\lambda_j({\bf W}_k^{(i)})$, i.e., if $\sigma_{{\bf A}_k}\sigma_{{\bf W}_k}<1$ $\underline{\forall k=\{1, 2, \dots, n\}}$, then $\lim_{l \rightarrow \infty} \mathcal{I}({\bf x}; {\bf y}^{(l)})=0$.
\end{corollary}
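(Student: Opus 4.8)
The plan is to obtain this corollary as a direct specialization of Theorem~\ref{Theorem: GCNN Information Loss General}: that theorem already gives $\mathcal{I}(\mathbf{x};\mathbf{y}^{(l)})=\mathcal{O}\big((\sup_i\sigma^{(i)})^l\big)$ whenever $\sup_{i}\sigma^{(i)}<1$, where $\sigma^{(i)}=\big\|\mathbf{P}^{(i)}\sum_{k=1}^n(\mathbf{W}_k^{(i)}\otimes\mathbf{A}_k)\big\|_2$. So the whole task is to verify that the aligned decomposition $\mathbf{A}_k=\mathbf{U}_\mathbf{A}\mathbf{S}_k\mathbf{V}_\mathbf{A}^T$, together with the hypothesis $\sigma_{\mathbf{A}_k}\sigma_{\mathbf{W}_k}<1$ for all $k$, forces $\sup_i\sigma^{(i)}<1$. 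Since $a\in(0,1)$, each $\mathbf{P}^{(i)}$ is diagonal with entries in $\{a,1\}\subseteq[0,1]$, so $\|\mathbf{P}^{(i)}\|_2\le 1$ and, by submultiplicativity of the spectral norm, $\sigma^{(i)}\le\big\|\sum_{k=1}^n\mathbf{W}_k^{(i)}\otimes\mathbf{A}_k\big\|_2$; this reduces everything to a single-layer spectral-norm estimate.

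Next I would diagonalize out the common singular bases. Using the mixed-product identity $(\mathbf{B}\mathbf{C})\otimes(\mathbf{D}\mathbf{E})=(\mathbf{B}\otimes\mathbf{D})(\mathbf{C}\otimes\mathbf{E})$ (applied on both the left and right factor), each summand becomes $\mathbf{W}_k^{(i)}\otimes\mathbf{A}_k=(\mathbf{I}_d\otimes\mathbf{U}_\mathbf{A})(\mathbf{W}_k^{(i)}\otimes\mathbf{S}_k)(\mathbf{I}_d\otimes\mathbf{V}_\mathbf{A}^T)$, hence
\[
\sum_{k=1}^n\mathbf{W}_k^{(i)}\otimes\mathbf{A}_k=(\mathbf{I}_d\otimes\mathbf{U}_\mathbf{A})\Big(\sum_{k=1}^n\mathbf{W}_k^{(i)}\otimes\mathbf{S}_k\Big)(\mathbf{I}_d\otimes\mathbf{V}_\mathbf{A}^T).
\]
As $\mathbf{U}_\mathbf{A},\mathbf{V}_\mathbf{A}$ are orthogonal, $\mathbf{I}_d\otimes\mathbf{U}_\mathbf{A}$ and $\mathbf{I}_d\otimes\mathbf{V}_\mathbf{A}^T$ are orthogonal, hence isometries in the spectral norm, so $\big\|\sum_k\mathbf{W}_k^{(i)}\otimes\mathbf{A}_k\big\|_2=\big\|\sum_k\mathbf{W}_k^{(i)}\otimes\mathbf{S}_k\big\|_2$.

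The crux is the structure of $\sum_k\mathbf{W}_k^{(i)}\otimes\mathbf{S}_k$. Since $\mathbf{S}_k=\sigma_{\mathbf{A}_k}\,\mathbf{e}_k\mathbf{e}_k^T$ is supported on the single coordinate $k$, the matrix $\mathbf{W}_k^{(i)}\otimes\mathbf{S}_k$ has nonzero entries only in the rows and columns whose $\mathbf{A}$-coordinate index equals $k$, and these index sets are pairwise disjoint over $k\in\{1,\dots,n\}$. Therefore, after one fixed permutation of rows and of columns, $\sum_k\mathbf{W}_k^{(i)}\otimes\mathbf{S}_k$ is block-diagonal with $n$ blocks, the $k$-th equal to $\sigma_{\mathbf{A}_k}\mathbf{W}_k^{(i)}$, so its spectral norm is $\max_{1\le k\le n}\sigma_{\mathbf{A}_k}\|\mathbf{W}_k^{(i)}\|_2$. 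This is exactly where alignment of the decomposition with the singular basis of $\mathbf{A}$ matters: a generic decomposition would only give the triangle-inequality bound $\sum_k\sigma_{\mathbf{A}_k}\|\mathbf{W}_k^{(i)}\|_2$, whereas here the sum collapses to a maximum --- the weakening of the convergence condition the paper advertises. I expect this block-diagonal bookkeeping (tracking which Kronecker indices are nonzero) to be the only genuinely delicate step.

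Putting the pieces together, $\sigma^{(i)}\le\max_k\sigma_{\mathbf{A}_k}\|\mathbf{W}_k^{(i)}\|_2$, and since the maximum runs over the finite set $\{1,\dots,n\}$ it commutes with $\sup_i$, so $\sup_i\sigma^{(i)}\le\max_k\sigma_{\mathbf{A}_k}\sup_i\|\mathbf{W}_k^{(i)}\|_2=\max_k\sigma_{\mathbf{A}_k}\sigma_{\mathbf{W}_k}$. The hypothesis $\sigma_{\mathbf{A}_k}\sigma_{\mathbf{W}_k}<1$ for every $k$, again because there are only finitely many $k$, yields $\max_k\sigma_{\mathbf{A}_k}\sigma_{\mathbf{W}_k}<1$, hence $\sup_i\sigma^{(i)}<1$. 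Theorem~\ref{Theorem: GCNN Information Loss General} then gives $\mathcal{I}(\mathbf{x};\mathbf{y}^{(l)})=\mathcal{O}\big((\sup_i\sigma^{(i)})^l\big)\to0$ as $l\to\infty$, which is the claim; everything besides the block-diagonal step is Kronecker algebra, orthogonal invariance, and submultiplicativity.
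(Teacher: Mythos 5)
Your proposal is correct, and its top-level skeleton matches the paper's: reduce the corollary to the per-layer bound $\sigma^{(i)}\le\max_k\sigma_{{\bf A}_k}\sigma_{{\bf W}_k}$ (using that ${\bf P}^{(i)}$ has diagonal entries at most $1$) and then invoke the general information-loss machinery (Theorem~\ref{Theorem: GCNN Information Loss General} via Lemma~\ref{Lemma: MI General}). Where you genuinely diverge is in how you establish that the singular values of $\sum_k{\bf W}_k^{(i)}\otimes{\bf A}_k$ collapse to a maximum over $k$ rather than a sum. The paper proves this as a standalone result (Lemma~\ref{Lemma: Decomposition}) by also taking the SVD of each ${\bf W}_k$, assembling candidate singular-vector matrices $\tilde{\bf U}=\sum_k({\bf U}_{{\bf W}_k}\otimes{\bf U}_{\bf A}){\bf M}_k$ and $\tilde{\bf V}$ with mask matrices ${\bf M}_k$, and verifying unitarity through several pages of entrywise inner-product computations. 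You instead factor out the $k$-independent orthogonal matrices ${\bf I}_d\otimes{\bf U}_{\bf A}$ and ${\bf I}_d\otimes{\bf V}_{\bf A}^T$ via the mixed-product identity and observe that the summands ${\bf W}_k^{(i)}\otimes{\bf S}_k$ have pairwise disjoint supports, so the sum is orthogonally equivalent to a block-diagonal matrix with blocks $\lambda_k({\bf A}){\bf W}_k^{(i)}$. This is shorter and more transparent, it isolates exactly why the aligned decomposition beats the triangle-inequality bound of a generic one, and since the block-diagonal form also exhibits the full singular-value set $\{\lambda_k({\bf A})\lambda_j({\bf W}_k)\}$ it could replace the paper's Lemma~\ref{Lemma: Decomposition} outright; the paper's longer construction buys only the explicit singular vectors, which the corollary does not need.
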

\begin{corollary}\label{Corollary: GCNN no information no loss decompA}
Let $\gamma_{{\bf W}_k} = \inf_{i\in \mathbb{N}^+} \min_j \lambda_j({\bf W}_k^{(i)})$. If $a\sigma_{{\bf A}_k}\gamma_{{\bf W}_k}\geq 1$, $\underline{\forall k \in \{1, 2, \dots, n\}}$, then $\mathcal{L}({\bf y}^{(l)}) = 0$ $\forall l\in \mathbb{N}^+$.
\end{corollary}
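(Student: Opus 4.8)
The plan is to reduce everything to Theorem~\ref{Theorem: GCNN Information No Loss General}: it suffices to check that the stated hypothesis forces $\gamma^{(i)}=\min_j\lambda_j\big({\bf P}^{(i)}\sum_{k=1}^n({\bf W}_k^{(i)}\otimes{\bf A}_k)\big)\geq 1$ for \emph{every} layer $i$, since then $\inf_i\gamma^{(i)}\geq1$ and that theorem yields $\mathcal{L}({\bf y}^{(l)})=0$ for all $l\in\mathbb{N}^+$. Write $\lambda_{\min}(\cdot)$ for the smallest singular value. First I would strip off the ReLU mask: ${\bf P}^{(i)}$ is diagonal with entries in $\{a,1\}$ and $a\in(0,1)$, so $\lambda_{\min}({\bf P}^{(i)})\geq a$, and the elementary bound $\|{\bf B}{\bf C}{\bf z}\|\geq\lambda_{\min}({\bf B})\lambda_{\min}({\bf C})\|{\bf z}\|$ (valid for square ${\bf B},{\bf C}$) gives
\[
\gamma^{(i)}\ \geq\ a\,\lambda_{\min}\Big(\sum_{k=1}^n{\bf W}_k^{(i)}\otimes{\bf A}_k\Big).
\]

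The heart of the argument is to evaluate this singular value, exploiting that every ${\bf A}_k$ is built from the \emph{same} bases ${\bf U}_{\bf A},{\bf V}_{\bf A}$. Substituting ${\bf A}_k={\bf U}_{\bf A}{\bf S}_k{\bf V}_{\bf A}^T$ and using Kronecker mixed-product identities,
\[
\sum_k{\bf W}_k^{(i)}\otimes{\bf A}_k=({\bf I}_d\otimes{\bf U}_{\bf A})\Big(\sum_k{\bf W}_k^{(i)}\otimes{\bf S}_k\Big)({\bf I}_d\otimes{\bf V}_{\bf A}^T),
\]
whose outer factors are orthogonal, so the singular values coincide with those of $\sum_k{\bf W}_k^{(i)}\otimes{\bf S}_k$. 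Conjugating by the perfect-shuffle permutation ${\bf \Pi}$ (which satisfies ${\bf \Pi}({\bf B}\otimes{\bf C}){\bf \Pi}^T={\bf C}\otimes{\bf B}$) turns this into $\sum_k{\bf S}_k\otimes{\bf W}_k^{(i)}$, which is block diagonal in $d\times d$ blocks because each ${\bf S}_k$ is supported on the single diagonal entry $(k,k)$; the $m$-th block is exactly $\lambda_m({\bf A})\,{\bf W}_m^{(i)}=\sigma_{{\bf A}_m}{\bf W}_m^{(i)}$. Hence $\lambda_{\min}\big(\sum_k{\bf W}_k^{(i)}\otimes{\bf A}_k\big)=\min_m\sigma_{{\bf A}_m}\,\lambda_{\min}({\bf W}_m^{(i)})\geq\min_m\sigma_{{\bf A}_m}\gamma_{{\bf W}_m}$, using $\sigma_{{\bf A}_m}\geq0$ and $\lambda_{\min}({\bf W}_m^{(i)})\geq\gamma_{{\bf W}_m}$.

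Chaining the two displays, $\gamma^{(i)}\geq a\min_m\sigma_{{\bf A}_m}\gamma_{{\bf W}_m}=\min_m\big(a\,\sigma_{{\bf A}_m}\gamma_{{\bf W}_m}\big)\geq1$ by the hypothesis $a\,\sigma_{{\bf A}_k}\gamma_{{\bf W}_k}\geq1$ for all $k$; this bound is uniform in $i$, so Theorem~\ref{Theorem: GCNN Information No Loss General} applies and gives $\mathcal{L}({\bf y}^{(l)})=0$ for all $l$. I expect the main obstacle to be the bookkeeping in the block-diagonalization step: one must carry the commutation permutation ${\bf \Pi}$ along with the two orthogonal Kronecker factors ${\bf I}_d\otimes{\bf U}_{\bf A}$ and ${\bf I}_d\otimes{\bf V}_{\bf A}^T$, and verify that it is precisely the shared SVD basis across all ${\bf A}_k$ that makes $\sum_k{\bf W}_k^{(i)}\otimes{\bf A}_k$ decouple into independent $d\times d$ blocks, one per singular direction of ${\bf A}$ --- the structural feature separating this decomposition from plain GCN, where no such splitting occurs. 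A minor point: if ${\bf A}$ is rank deficient then some $\sigma_{{\bf A}_m}=0$ and the hypothesis cannot hold, so only the full-rank case requires attention.
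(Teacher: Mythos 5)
Your proof is correct, and it reaches the same two milestones as the paper's argument --- (i) the exact singular-value structure of $\sum_k {\bf W}_k^{(i)}\otimes{\bf A}_k$ under the shared-SVD-basis decomposition, and (ii) the layer-wise lower bound $\gamma^{(i)}\geq a\min_k\lambda_k({\bf A})\gamma_{{\bf W}_k}\geq 1$ fed into Theorem~\ref{Theorem: GCNN Information No Loss General} (equivalently, Lemma~\ref{Lemma: MI General}). The difference is in how you establish (i). The paper devotes a separate result, Lemma~\ref{Lemma: Decomposition}, to proving that the singular values of $\sum_k{\bf W}_k\otimes{\bf A}_k$ are exactly $\lambda_k({\bf A})\lambda_j({\bf W}_k)$, and its proof is a lengthy hands-on verification: it takes the SVD of each ${\bf W}_k$, introduces mask matrices ${\bf M}_k$, and checks entry by entry (via inner products of rows and columns) that $\tilde{\bf U}=\sum_k({\bf U}_{{\bf W}_k}\otimes{\bf U}_{\bf A}){\bf M}_k$ and $\tilde{\bf V}$ are unitary. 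Your route --- pull out the common orthogonal factors $({\bf I}_d\otimes{\bf U}_{\bf A})$ and $({\bf I}_d\otimes{\bf V}_{\bf A}^T)$ by the mixed-product identity, then conjugate by the commutation permutation so that the one-hot structure of the ${\bf S}_k$ makes $\sum_k{\bf S}_k\otimes{\bf W}_k^{(i)}$ visibly block diagonal with blocks $\lambda_m({\bf A}){\bf W}_m^{(i)}$ --- obtains the same conclusion in a few lines, needs no SVD of the ${\bf W}_k$, and makes transparent exactly why the shared singular basis is what decouples the sum. What the paper's longer computation buys is a fully explicit SVD of $\sum_k{\bf W}_k\otimes{\bf A}_k$ (explicit left and right singular-vector matrices), which your argument does not produce; but for this corollary only the multiset of singular values is needed, so nothing is lost. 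Your closing observations --- that the min over $m$ is preserved under termwise bounds, that the estimate is uniform in $i$, and that the hypothesis forces ${\bf A}$ to be full rank --- are all accurate.
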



\paragraph*{Discussion: Impact of
Decomposition.}
Consider the setting where ${\bf A}$ is fixed for both GCN and GraphCNN. The discussion below will revolve around the regime of singular values in layer-wise weight matrices, ${\bf W}_{\text{GCN}}^{(i)}$ and ${\bf W}_{\text{GraphCNN}}^{(i)}$  where the information loss $\mathcal{L}( {\bf y}^{(l)})=0$ for specific decomposition strategy used in Corollary~\ref{Corollary: GCNN no information no loss decompA}.

Recall from Theorem~\ref{Theorem: GCN no information loss} and Corollary~\ref{Corollary: GCNN no information no loss decompA} that while GCN requires singular values of all weight matrices ${\bf W}_{\text{GCN}}^{(i)}$ to compensate for the minimum singular value of ${\bf A}$ such that $\min_j\lambda_j({\bf W}_{\text{GCN}}^{(i)})\geq \frac{1}{a\min_k\lambda_k({\bf A})}$ to ensure $\mathcal{L}({\bf y}^{(l)})=0$, GraphCNN relaxes this condition by introducing a milder constraint. That is, the singular values of its weight matrices ${\bf W}_{\text{k, GraphCNN}}^{(i)}$ need to compensate only for the singular value of their respective component ${\bf A}_k$, meaning $\min_j \lambda_j({\bf W}_{\text{k, GraphCNN}}^{(i)})\geq \frac{1}{a\lambda_k({\bf A})}$ implies $\mathcal{L}({\bf y}^{(l})=0$.

The decomposition makes deep GCN training easier by permitting a much larger regime of model weights where the information is still preserved. In other words, under the same weight characteristics (singular values of layer-wise weight matrices), the decomposed GCN will be able to preserve more information of the node features than the vanilla GCN when going deeper.
So far, we theoretically justify the potential of graph decomposition in the infinite-sample regime. For the analysis in the finite-sample regime, one could possibly utilize the theory of information bottleneck~\cite{saxe2019information, shamir2010learning}, we leave this as future work.
In the next section, we will explore the decomposition strategy selection and propose an automatic graph decomposition algorithm for arbitrary graph-structured data.
\newpage
\vspace{-0.5em}
\section{Graph Decomposition for General Graph-structured Data}
\vspace{-0.5em}

\begin{wrapfigure}{r}{0.65\textwidth}
    \centering
    \vspace{-1em}
    \subfigure[Training Accuracy]{
    \includegraphics[width=0.45\linewidth]{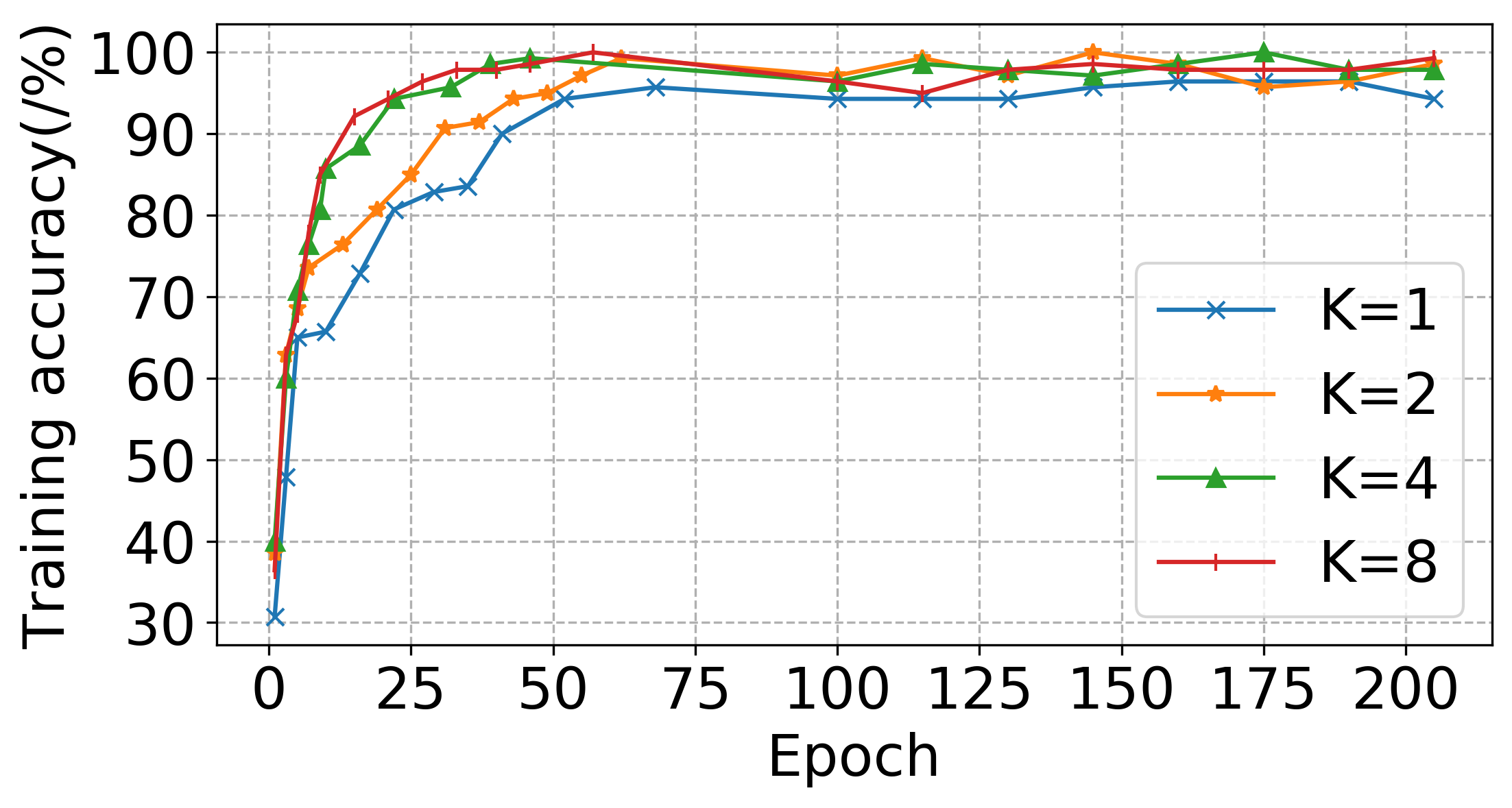}
    \label{fig:train}}
    \subfigure[Testing Accuracy]{
    \includegraphics[width=0.45\linewidth]{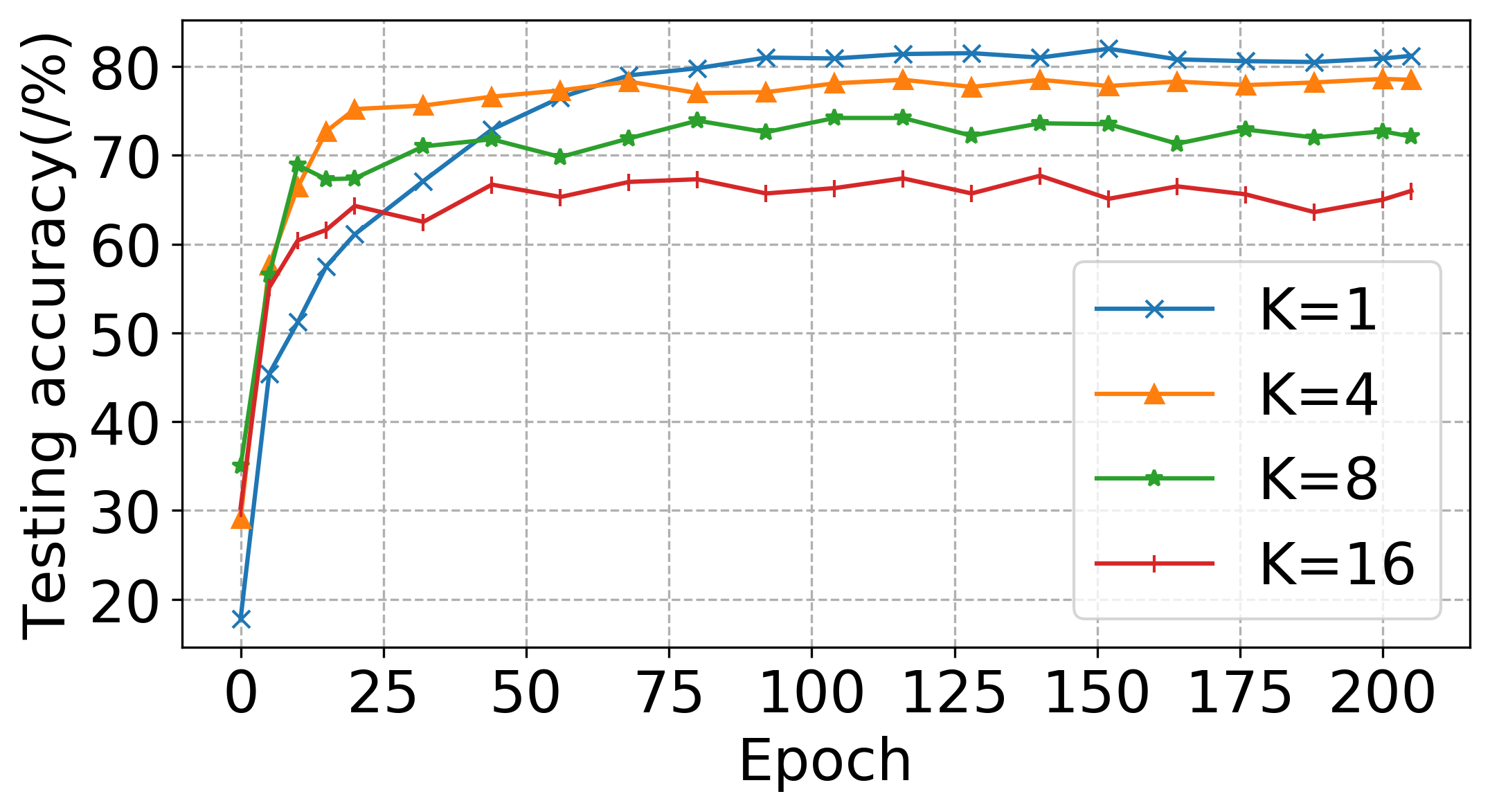}
    \label{fig:test}}
    \caption{Performance of random graph decomposition on Cora.}\label{fig:graph}
    \vspace{-1em}
\end{wrapfigure}
Despite the theoretical merits of graph decomposition, it is non-trivial to perform decomposition on arbitrary graph-structured data. Clearly, there is no absolute geometric space and direction concept in most real-world graphs, and the \textit{spatial anisotropy}~\cite{knyazev2018spectral} makes decomposing an image with the predefined coordinates and directions much easier than graph. As such, we first explore a naive decomposition method (random graph decomposition) and analyze its weakness. Then, we propose a more reasonable graph decomposition strategy to address those weakness.




\vspace{-0.5em}
\subsection{Random graph decomposition}
\vspace{-0.5em}

A simple graph decomposition strategy is to randomly decompose the adjacency matrix $\bm{A}$ into $K$ pieces, i.e., directly distribute the edges from the original graph into the subgraphs, then rebuild every layer with $K$ weight matrices. To show the weakness of this method, we empirically evaluate its performance on the Cora dataset~\cite{DBLP:conf/iclr/KipfW17} (experimental details are in the Appendix~\ref{apdx:setup}). Results are shown in Figure~\ref{fig:graph}. Setting $K$ to $1$ corresponds to the original GCN. It is observed that increasing the number of decomposition components increases the convergence speed but leads to lower testing accuracy. Two possible explanations for this phenomenon are: (1) More weight matrices brought by decomposition may cause overfitting; (2) Random decomposition may break the graph connectivity and impede the spread of information as GCN relies on the graph structures to propagate the node features and labels along the edges. Random decomposition may lead to the result that the nodes can be trapped into a smaller region and cannot spread to distant reachable nodes in the original graph. 
Therefore, an inappropriate decomposition could reduce the graph connectivity and affects the information propagation among nodes, thus, decreasing the model learning ability.
Reducing $K$ for different GCN layers can partly alleviate the overfitting problem, but how to maintain the graph connectivity remains a challenge.  

\vspace{-0.5em}
\subsection{DeGNN: Connectivity-aware graph decomposition}
\vspace{-0.5em}

\begin{wrapfigure}{r}{0.55\textwidth}
\vspace{-2em}
\centering
\begin{minipage}[t]{0.55\textwidth}
\begin{algorithm}[H]
\caption{Connectivity-aware graph decomposition}
\small
\label{alg:all}
\begin{flushleft}
\textbf{Input}: The graph $\mathcal{G}=(\mathcal{V},\mathcal{E})$, $N=|\mathcal{V}|$.\\
\textbf{Parameter}: The number of partitions $p$ for METIS, the number of decomposed graphs $K$.\\
\textbf{Output}: The decomposed graph ($\mathcal{G}_1, \mathcal{G}_2,...,\mathcal{G}_K)$.
\end{flushleft}
\begin{algorithmic}[1] 
\STATE Partition the graph $\mathcal{G}$ into $p$ subgraphs with METIS.
\STATE Merge the subgraphs into $\mathcal{G}_m$.
\STATE $T \gets$ Generate a random spanning forest on $\mathcal{G}_m$.
\STATE $\forall i\in[1,K]$, $R_i\gets(\mathcal{V},\varnothing)$.
\STATE $p\gets0$.
\FOR{$i=1$ \textbf{to} $N$}
\FOR{$v_j\in$ Neighbor($v_i$) on the residual graph $\mathcal{G}/T$}
\STATE Assign edge $(v_i, v_j)$ to $R_{p+1}$
\STATE $p\gets(p+1)\%K$
\ENDFOR
\ENDFOR
\STATE \textbf{return} $(R_1\cup T, R_2\cup T,..., R_K\cup T)$
\end{algorithmic}
\end{algorithm}
\end{minipage}
\vspace{-1em}
\end{wrapfigure}
Inspired by our theoretic analysis on leveraging graph decomposition to preserve node feature information along with different GNN layers and the drawbacks of random graph decomposition, we propose the DeGNN to automatically perform graph decomposition on general graph structured data.
Different with random decomposition, to take the graph connectivity into account, we propose to utilize the spanning tree structure for preserving the accessibility of the nodes. As shown in Algorithm~\ref{alg:all}, we first generate the spanning forest of the graph (line 3), and the replicas of the graph skeleton $T$ will be distributed to the decomposed graphs (line 12). In this way, the node connectivity is still preserved after the decomposition. 
To control the graph connectivity of the generated spanning tree structures, we use METIS~\cite{karypis1998fast} to eliminates some edge cuts before generating the skeletons. The hyperparameter $p$ in METIS controls the amounts of edge cuts and further leads to different levels of node connectivity (lines 1-2). Finally, we decompose the residual graph (lines 4-11), and for each node the adjacent nodes and the associated edges are distributed in the decomposition graphs uniformly and randomly (lines 7-10). 

The advantages of this proposed connectivity-aware graph decomposition are: (1) it will not generate independent subgraphs such that the information propagation process is not blocked; (2) it can ease the overfitting problem of random graph decomposition.

\vspace{-0.5em}
\section{Experiments}
\vspace{-0.5em}



\begin{table}[t]

\centering
\caption{Dataset Statistics} \label{Dataset}
\scriptsize
\begin{tabular}{crrrcccc}
\toprule
\textbf{Dataset}&\textbf{\#Nodes}& \textbf{\#Features}&\textbf{\#Edges}&\textbf{\#Classes}&\textbf{\#Train/Val/Test}&\textbf{Task type}\\
\midrule
Cora& 2,708 & 1,433 &5,429&7& 140/500/1,000 & Transductive\\
Citeseer& 3,327 & 3,703&4,732&6& 120/500/1,000 & Transductive\\
Pubmed& 19,717 & 500 &44,338&3& 60/500/1,000 & Transductive\\
\midrule
Company Small& 96,532  & 114& 1,013,936 & 2 &709,755/101,393/202,788 &Transductive  \\
Company Large& 126,327 & 480& 5,001,222  & 2 &3,500,855/500,122/1,000,245 &Transductive  \\
Amazon Computer& 13,381  & 767& 245,778 & 10 &200/300/12,881&Transductive\\
Amazon Photo &7,487  & 745& 119,043 & 8 & 160/240/7,087&Transductive\\
Coauthor CS& 18,333  & 6,805 & 81,894 & 15 & 300/450/17,583 & Transductive\\
Coauthor Physics& 34,493 & 8,415 & 247,962 & 5 & 100/150/34,243 & Transductive\\
Actor & 7,600 & 931 & 33,544 & 5 & 3,648/608/760 & Transductive \\
\midrule
Flickr& 89,250 & 500 & 899,756 & 7 & 44,625/22,312/22,312 & Inductive\\ 
Reddit& 232,965 & 602 & 11,606,919 & 41 & 155,310/23,297/54,358 & Inductive \\ 
\bottomrule
\end{tabular}
\end{table}

\begin{table}[t!]
\centering
\caption{Test accuracy (in \%) on the benchmark datasets. $^*$ indicates that we ran our own implementation.
We 
use bold font for methods with the highest 
average accuracy.}
\label{Node}
\noindent
\scriptsize
\begin{tabular}{lccc|cc}
\toprule
\multirow{2}{*}{\textbf{Models}} & \multicolumn{3}{c}{\textbf{Transductive}} & \multicolumn{2}{c}{\textbf{Inductive}}\\\cmidrule(lr){2-4} \cmidrule(lr){5-6} & \textbf{Cora}& \textbf{Citeseer}&\multicolumn{1}{c}{\textbf{Pubmed}}&\textbf{Reddit}&\textbf{Flickr} \\
\midrule
GPNN& 81.8 & 69.7&79.3 &  \multicolumn{2}{c}{\multirow{2}{*}{GraphSAGE}}\\
NGCN& 83.0 & 72.2 &79.5  \\
DGCN& 83.5 & 72.6&80 & 95.4$\pm$0.0 & 50.1$\pm$1.3\\
\cline{5-6} DropEdge& 82.8 & 72.3 & 79.6 &  \multicolumn{2}{c}{\multirow{2}{*}{DropEdge}}\\
DGI& 82.3$\pm$0.6 & 71.8$\pm$0.7 &76.8$\pm$0.6 \\
GMI& 82.7$\pm$0.2 & 73.0$\pm$0.3 & \textbf{80.1$\pm$0.2} & \textbf{96.7$\pm$0.0} & 51.9$\pm$0.0$^*$ \\
\cline{5-6} GAT& 83.0$\pm$0.7 & 72.5$\pm$0.7 &79.0$\pm$0.3 &\multicolumn{2}{c}{\multirow{2}{*}{FastGCN}} \\
LGCN& 83.3$\pm$0.5 & 73.0$\pm$0.6 &79.5$\pm$0.2 \\
APPNP& 83.3$\pm$0.5 & 71.8$\pm$0.5 & \textbf{80.1$\pm$0.2} & 93.7$\pm$0.0 & 50.4$\pm$0.1 \\
\midrule
GCN$^*$& 81.8$\pm$0.5 & 70.8$\pm$0.5 &79.3$\pm$0.7 & 95.7$\pm$0.0 & 49.2$\pm$0.3 \\
JK-Net$^*$& 81.8$\pm$0.5  & 70.7$\pm$0.7 & 78.8$\pm$0.7 & 96.4$\pm$0.1 & 51.9$\pm$0.1 \\
ResGCN$^*$& 82.2$\pm$0.6 & 70.8$\pm$0.7 & 78.3$\pm$0.6 & 96.3$\pm$0.1 & 51.5$\pm$0.1 \\
DenseGCN$^*$& 82.1$\pm$0.5 & 70.9$\pm$0.8 &79.1$\pm$0.9 & 96.4$\pm$0.0 & 52.1$\pm$0.0 \\
\midrule
\textbf{DeGNN(GCN)$^*$}& 83.7$\pm$0.4 & 72.5$\pm$0.3 & 79.8$\pm$0.6 &  96.4$\pm$0.0 & 51.5$\pm$0.2 \\
\textbf{DeGNN(JK)$^*$}& 84.1$\pm$0.3 & \textbf{73.1$\pm$0.5} & 80.0$\pm$0.4 & 96.6$\pm$0.0 & \textbf{52.5$\pm$0.0} \\
\textbf{DeGNN(Res)$^*$}& 83.9$\pm$0.5 & 72.6$\pm$0.4 & 79.9$\pm$0.5 & \textbf{96.7$\pm$0.1} & 51.9$\pm$0.1 \\
\textbf{DeGNN(Dense)$^*$}& \textbf{84.3$\pm$0.3} & 72.7$\pm$0.5 & \textbf{80.1$\pm$0.7} & 96.6$\pm$0.0 & \textbf{52.5$\pm$0.0} \\
\bottomrule
\end{tabular}
\vspace{-1em}
\end{table}

\label{sec:exp}
We conduct experiments on widely used benchmark datasets to validate the effectiveness of our method in both transductive and inductive settings. An overview summary of statistics of the datasets is given in Table~\ref{Dataset}. We leave the detailed description
of the datasets, detailed implementation settings and the hyperparameter search procedure in the Appendix~\ref{apdx:dataset} and \ref{apdx:setup}.

\paragraph{Comparison with state-of-the-art}
We compare our method with the representative methods in recent years, including shallow models such as GCN~\cite{DBLP:conf/iclr/KipfW17}, GPNN~\cite{DBLP:conf/iclr/LiaoBTGUZ18}, NGCN~\cite{DBLP:conf/uai/Abu-El-HaijaKPL19}, DGCN~\cite{zhuang2018dual},  DGI~\cite{DBLP:conf/iclr/VelickovicFHLBH19}, GMI~\cite{DBLP:conf/www/PengHLZRXH20}, GAT~\cite{velivckovic2017graph}, LGCN~\cite{gao2018large}, and APPNP~\cite{DBLP:conf/iclr/KlicperaBG19}; and deeper models such as JK-Net~\cite{DBLP:conf/icml/XuLTSKJ18}, ResGCN~\cite{DBLP:conf/iclr/KipfW17}, DenseGCN~\cite{DeeperInsight}, DropEdge~\cite{rong2019dropedge}. 
We also compare our method with the inductive methods such as GraphSAGE~\cite{hamilton2017inductive}, DropEdge and FastGCN~\cite{DBLP:conf/iclr/ChenMX18} on two larger graph dataset including Flickr and Reddit.

Since we can apply the decomposition 
techniques to a range of base models,
we use DeGNN(GCN), DeGNN(JK),
DeGNN(Res), and DeGNN(Dense) to denote
the method that applies our
decomposition algorithm to vanilla GCN,
JK-Net, ResGCN, and DenseGCN.


Table~\ref{Node} and \ref{tab:MoreNode} summarizes the test accuracy of the baselines and our approaches. On Cora, Citeseer, and Pubmed, DeGNN(GCN) achieves significantly 
better performance. On many datasets,
simply adding the decomposition step
on GCN can lead to better performance
even better than more recent state-of-the-art models!
Moreover, 
with the help of deeper architectures, DeGNN with more advanced base model can outperform current state-of-the-art methods. Specifically, DeGNN(Dense) achieves a remarkable  84.3\% testing accuracy with 5 layers on Cora.

We further evaluate DeGNN on a variety of other datasets such as Coauthor CS, Coauthor Physics, Amazon Computers and Amazon Photo, and two new real-world dataset Company Small and Company Large.
The results demonstrate that, in general, DeGNN outperform GCN, JK-Net, ResGCN and DenseGCN and GAT, and the base models can benefit from DeGNN.

Besides the transductive tasks, we also evaluate the DeGNN on the inductive ones. As it is not 
well suitable for the standard GCN setting, we add an additional decomposition step when involving the validation set and the testing set. It is excited to see that DeGNN can still achieve competitive results. We think it is an interesting future work to design an end-to-end framework that can automatically combine DeGNN with graph sampling based methods for inductive scenarios.

\begin{figure*}[t]
\centering
\subfigure{
\scalebox{.31}{
\includegraphics[width=0.9\linewidth]{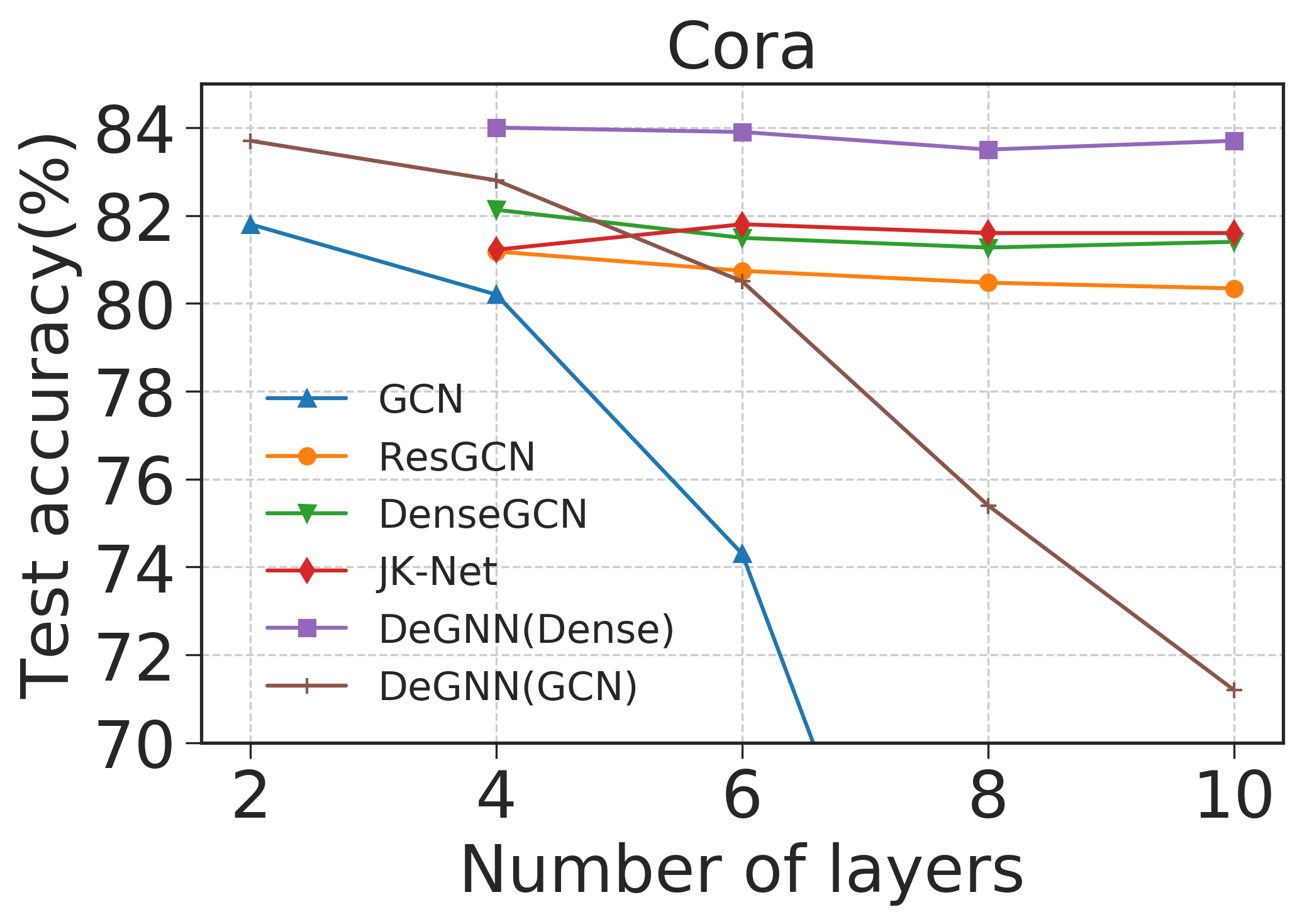}
}
}
\subfigure{
\scalebox{.31}{
\includegraphics[width=0.9\linewidth]{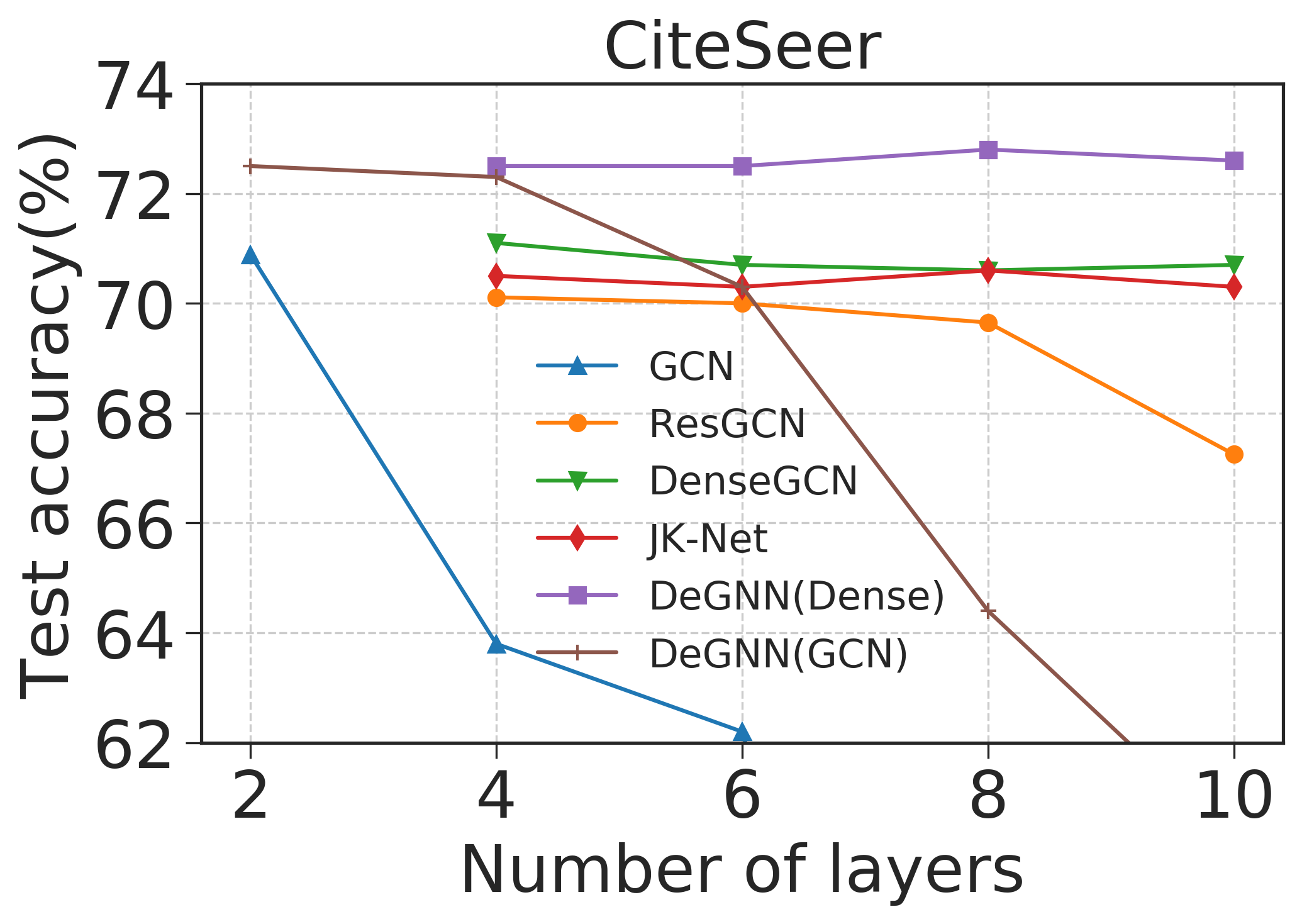}
}
}
\subfigure{
\scalebox{.31}{
\includegraphics[width=0.9\linewidth]{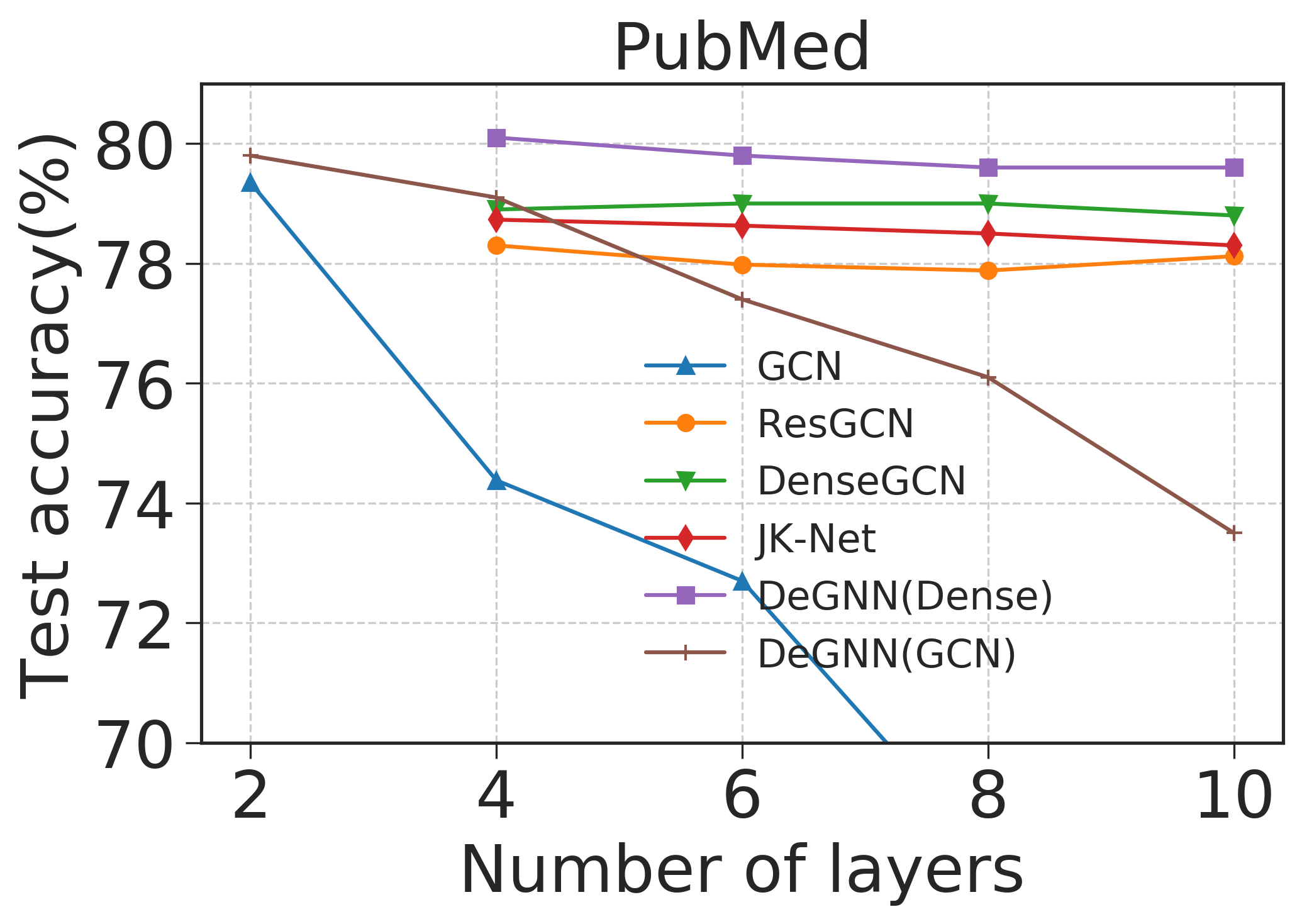}
}
}
\caption{Influence of model depth (number of layers) on classification performance. More details with other backbones are reported in the Appendix~\ref{apdx:depth}.}
\label{Deep}
\end{figure*}


\paragraph{Analysis on the deep architecture.} Here, we investigate the influence of model depth (number of layers) on classification performance on the three citation datasets. We compare DeGNN(GCN) and DeGNN(Dense) with ResGCN, JK-Net, and DenseGCN. 
When the model depth is two, all baselines degenerate to the original 2-layer GCN model.
As shown in Figure~\ref{Deep}, for the original GCN, it gets the best results with a 2-layer model and its performance decreases rapidly with the increase of layers.
For ResGCN, DenseGCN, and JK-Net, they can keep more information on the original features compared with GCN and get a relatively good performance, but perform much worse than DeGNN(Dense). Even with 10 layers, the performance of DeGNN does not decrease as the other baselines do and outperform their best results on all datasets.

\begin{wrapfigure}{r}{0.4\textwidth}
\vspace{-2em}
    \centering
    \includegraphics[width=.9\linewidth]{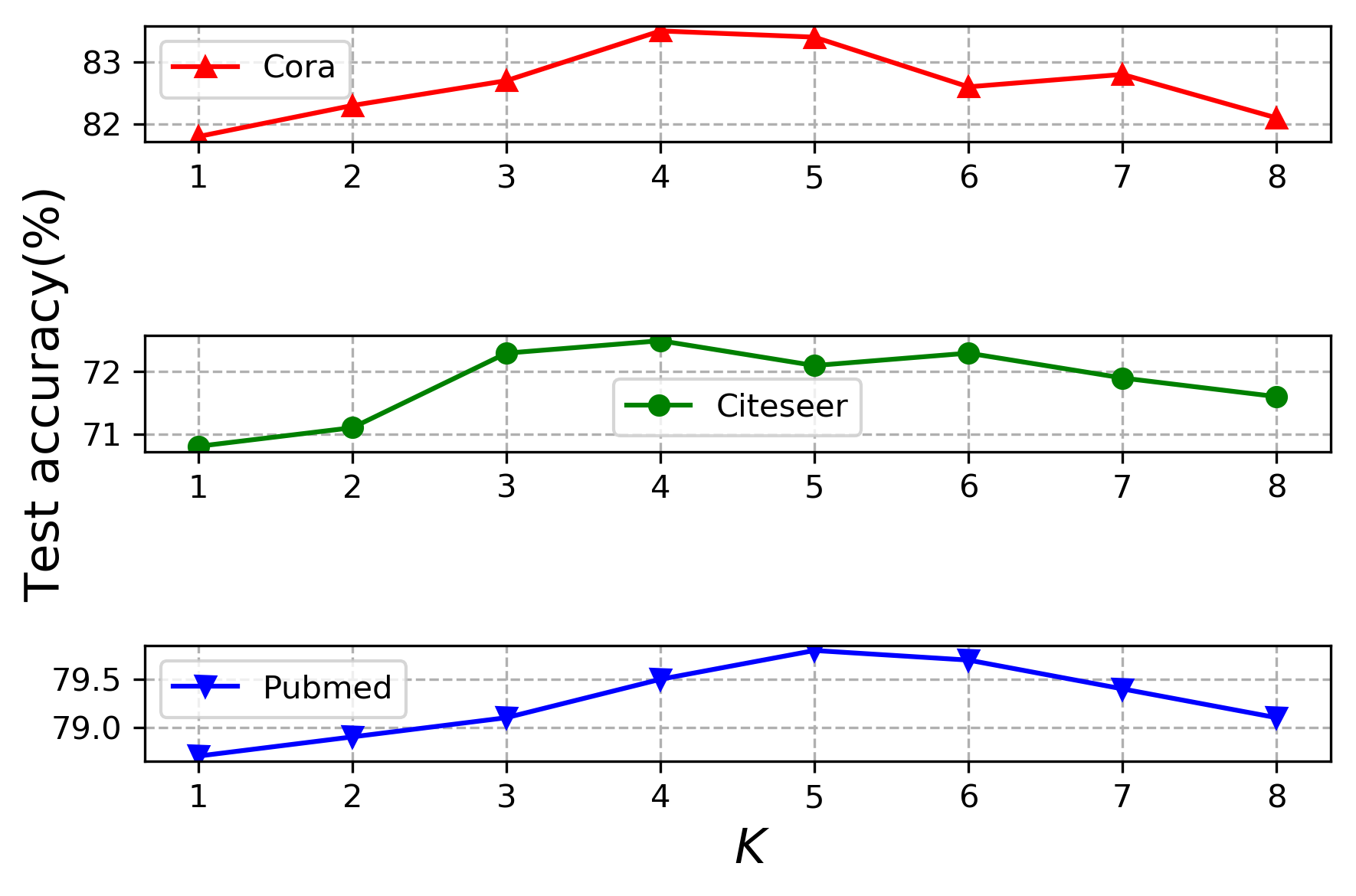}
    \caption{Test accuracy for different number of decomposed pieces $K$.}
\label{fig:selectn}
    \vspace{-1em}
\end{wrapfigure}

\paragraph{Impact of the decomposition parameter $K$.}
The number of decomposed subgraphs $K$ is an important parameter in our framework.
To analyze its influence, we conduct an experiment on three citation networks 
and Figure~\ref{fig:selectn} illustrates 
the result.
Here we set the skeleton $T$ to $\varnothing$ so that the decomposition strategy dominates the model performance. As we can see, the best number of decomposed subgraphs $K$ for Cora and Citeseer is 4 and it is 5 for Pubmed. As $K$ grows from 1, the test accuracy increases until it reaches the maximum point and it decreases when $K$ is larger. These results imply that with our spanning-tree-based sampling framework, there is an optimal graph decomposition parameter $K$ for better model performance. More analysis on the graph connectivity are in the Appendix~\ref{apdx:connect}.

\begin{table}[t!]
\centering
\caption{Test accuracy (in \%) on other datasets. $^*$ indicates that we ran our own implementation. We use AUC on the Company Small/Large because of class imbalance. We 
use bold font for methods with the highest 
average accuracy.}
\label{tab:MoreNode}
\scriptsize
\noindent
\begin{tabular}{lccccccc}
\toprule
\textbf{Models} & {\textbf{\makecell{Company \\Small}}}& {\textbf{\makecell{Company \\Large}}} & {\textbf{\makecell{Amazon \\Computer}}} & 
{\textbf{\makecell{Amazon \\Photo}}} & {\textbf{\makecell{Coauthor \\CS}}} & 
{\textbf{\makecell{Coauthor \\Physics}}} &
\textbf{Actor}\\
\midrule
GAT$^*$& 70.4$\pm$0.5 & 80.6$\pm$0.5 & 80.1$\pm$0.6 & 85.7$\pm$1.0 & 87.4$\pm$0.2 & 90.2$\pm$1.4 & 27.7$\pm$0.5\\
GCN$^*$& 73.1$\pm$0.6& 80.5$\pm$0.4 & 82.4$\pm$0.4 & 85.9$\pm$0.6 & \textbf{90.7$\pm$0.2} & 92.7$\pm$1.1 & 27.0$\pm$0.8\\
JK-Net$^*$& 71.9$\pm$0.3& 80.7$\pm$0.4 & 82.0$\pm$0.6 & 85.9$\pm$0.7 & 89.5$\pm$0.6 & 92.5$\pm$0.4 & 24.7$\pm$0.9\\
ResGCN$^*$& 73.0$\pm$0.5& 80.2$\pm$0.4 & 81.1$\pm$0.7 & 85.3$\pm$0.9 & 87.9$\pm$0.6 & 92.2$\pm$1.5 & 27.0$\pm$1.2 \\
DenseGCN$^*$& 73.5$\pm$0.3& 80.9$\pm$0.2 & 81.3$\pm$0.9 & 84.9$\pm$1.1 & 88.4$\pm$0.8 & 91.9$\pm$1.4 & 26.2$\pm$0.8\\
\midrule
\textbf{DeGNN(GCN)$^*$}& 71.6$\pm$0.3&  81.1$\pm$0.2 & 82.8$\pm$0.6 & \textbf{86.3$\pm$0.4} & 89.5$\pm$0.5 & 92.4$\pm$0.5 & \textbf{29.2$\pm$0.5}\\
\textbf{DeGNN(JK)$^*$}& 72.4$\pm$0.4& 80.7$\pm$0.3 & 82.5$\pm$0.7 & 86.1$\pm$0.7 & 90.5$\pm$0.4 & 92.2$\pm$0.5 & 28.5$\pm$0.5\\
\textbf{DeGNN(Res)$^*$}&\textbf{ 74.0$\pm$0.4}& 81.1$\pm$0.4 & 82.5$\pm$0.5 & 85.8$\pm$0.9 & 90.1$\pm$0.2 & \textbf{92.9$\pm$0.6} & 28.6$\pm$0.8\\
\textbf{DeGNN(Dense)$^*$}& 73.7$\pm$0.6& \textbf{81.2$\pm$0.3} & \textbf{83.1$\pm$0.5} & 86.2$\pm$0.8 & 90.2$\pm$0.1 & 92.1$\pm$1.7 & 28.8$\pm$0.9\\
\bottomrule
\end{tabular}
\end{table}

\vspace{-0.5em}
\section{Conclusion}
\vspace{-0.5em}

In this paper, we investigated the importance of graph decomposition in graph neural networks. We theoretically verified that graph decomposition can help avoid the information loss problem caused by increasing networks depth. To utilize the information preserving ability of the decomposition in general graph-structured data , we introduce a novel connectivity-aware graph decomposition to balance the trade-off between information loss and model performance of GNNs. We conducted extensive experiments on ten datasets and analyzed the property of our model. Our model achieves state-of-the-art performances and could better preserve information with deeper architectures.

\bibliographystyle{abbrvnat}
\bibliography{arxiv-gcn}

\newpage
\appendix
\section{Proofs}
We begin by introducing our notation. Hereafter, scalars will be written in italics, vectors in bold lower-case and matrices in bold upper-case letters. For an $m\times n$ real matrix ${\bf A}$, the matrix element in the $i$th row and $j$th column is denoted as $({\bf A})_{ij}$, and $i$th entry of a vector ${\bf a} \in \mathbb{R}^{m}$ by $({\bf a})_i$. Also, $j$th column of ${\bf A}$ is denoted by $({\bf A})_j$, or $({\bf A})_{[i=1, 2, \dots, m], j}$. Similarly, we denote $i$th row by $({\bf A})_{i, [j=1, 2, \dots, n]}$. The inner product between two vectors $({\bf A})_i$ and $({\bf A})_{i'}$ is denoted by $\langle ({\bf A})_i ,({\bf A})_{i'}\rangle$. 

We vectorize a matrix ${\bf A}$ by concatenating its columns such that
\[
  \vectorized({\bf A}) =
  \left[ {\begin{array}{c}
  ({\bf A})_1 \\
  ({\bf A})_2 \\
  \vdots \\
  ({\bf A})_n \\
  \end{array} } \right]
\]
and denote it by $\vectorized({\bf A})$. For matrices ${\bf A}\in \mathbb{R}^{m\times n}$ and ${\bf B}\in \mathbb{R}^{k\times l}$, we denote the kronecker product of ${\bf A}$ and ${\bf B}$ by ${\bf A}\otimes {\bf B}$ such that
\[
  {\bf A} \otimes {\bf B} =
  \left[ {\begin{array}{ccc}
  ({\bf A})_{11}{\bf B} & \hdots & ({\bf A})_{1n}{\bf B} \\
  \vdots & \ddots & \vdots \\
  ({\bf A})_{m1}{\bf B} & \hdots & ({\bf A})_{mn}{\bf B} \\
  \end{array} } \right].
\]

Note that ${\bf A} \otimes {\bf B}$ is of size ${mk\times nl}$.

Next, we list some existing results which we require repeatedly throughout this section.

\paragraph*{Preliminaries.}
\begin{enumerate}
    \item Suppose ${\bf A}\in \mathbb{R}^{m\times n}$, ${\bf B}\in \mathbb{R}^{n\times k}$ and ${\bf C}\in \mathbb{R}^{k\times p}$. We have
    \begin{equation}\label{Equation: Vectorized Triple Product}
       \vectorized({\bf ABC}) = ({\bf C}^T\otimes {\bf A})\vectorized({\bf B}).
    \end{equation}
    \item Let ${\bf A}\in \mathbb{R}^{m\times n}$, ${\bf B}\in \mathbb{R}^{n\times k}$ and ${\bf C}\in \mathbb{R}^{m'\times n'}$, ${\bf D}\in \mathbb{R}^{n'\times k'}$
    \begin{equation}\label{Equation: Product Kronecker}
       ({\bf AB}\otimes {\bf CD}) = ({\bf A}\otimes {\bf C})({\bf B}\otimes {\bf D}).
    \end{equation}
    \item For ${\bf A}\in \mathbb{R}^{m\times m}$ and ${\bf B}\in \mathbb{R}^{n\times n}$, singular values of ${\bf A} \otimes {\bf B}$ is given by $\lambda_i({\bf A})\lambda_j({\bf B})$, $i=1, 2, \dots, m $ and $j=1, 2, \dots, n$.
    \item Let ${\bf x}$ and ${\bf y}$ be an $n$-dimensional random vector defined over finite alphabets $\mathcal{X}^n$ and $\Omega^n$, respectively. 
    We denote entropy of {\bf x} by $\mathcal{H}({\bf x})$ and mutual information between {\bf x} and {\bf y} by $\mathcal{I}({\bf x}; {\bf y})$. We list the followings:
    \begin{equation}\label{Equation: Information measures for fx}
    \begin{split}
        \mathcal{H}(f({\bf x})) &\stackrel{(a)}{\leq} \mathcal{H}({\bf x})\\
        \mathcal{I}({\bf x}; f({\bf y})) &\stackrel{(b)}{\leq} \mathcal{I}({\bf x}; {\bf y}) 
    \end{split}
    \end{equation}
    such that $f:\mathbb{R}\rightarrow\mathbb{R}$ is some deterministic function, and equality holds for both inequalities \textit{iff} $f$ is bijective.
    \item As introduced in Section~\ref{Section: Theoretical Analysis}, for GCN, we have:
    \begin{equation}\label{Equation: GCN Single layer}
    \begin{split}
        {\bf Y}^{(i+1)} = f_{{\bf A},{\bf W}^{(i+1)}}({\bf Y}^{(i)})= \sigma ({\bf A}{\bf Y}^{(i)}{\bf W}^{(i+1)}).
    \end{split}
    \end{equation}
    \item For GraphCNN, let now ${\bf A} \in \mathbb{R}^{n\times n}$ be decomposed into $K$ additive $n\times n$ matrices such that ${\bf A} = \sum_{k=1}^K{\bf A}_k$. The layer-wise propagation rule becomes: 
    {\small
    \begin{equation}\label{Equation: GCNN Single layer}
    \begin{split}
        {\bf Y}^{(i+1)} = g_{{\bf A}_k,{\bf W}_k^{(i+1)}}({\bf Y}^{(i)})= \sigma \bigg(\sum_{k=1}^K {\bf A}_k{\bf Y}^{(i)}{\bf W}_k^{(i+1)}\bigg).
    \end{split}
    \end{equation}
    }
\end{enumerate}

\paragraph*{Proofs.}
The proofs are listed below in order.
\begin{proof}[Proof of Lemma~\ref{Lemma: Linearize GCN}]
Applying vectorization to the layer-wise propagation rule introduced in~(\ref{Equation: GCN Single layer}), we have
\begin{equation}\label{Equation: GCN layer response}
    \begin{split}
        {\bf y}^{(i+1)} &{=}  \vectorized\big(\sigma({\bf AY}^{(i)}{\bf W}^{(i+1)})\big)\\
        {\bf y}^{(i+1)} &\stackrel{(a)}{=}   \sigma\big(\vectorized({\bf AY}^{(i)}{\bf W}^{(i+1)})\big)\\
        {\bf y}^{(i+1)} &\stackrel{(b)}{=} \sigma\big( (({\bf W}^{(i+1)})^T\otimes {\bf A}) {\bf y}^{(i)} \big)\\
        {\bf y}^{(i+1)} &\stackrel{(c)}{=}{\bf P}^{(i+1)}(({\bf W}^{(i+1)})^T\otimes {\bf A}) {\bf y}^{(i)} 
    \end{split}
\end{equation}
where (a) follows from the element-wise application of $\sigma$, (b) follows from~(\ref{Equation: Vectorized Triple Product}), and (c) results from introducing a diagonal matrix ${\bf P}^{(i+1)}$ with diagonal entries in $\{a, 1\}$ such that $({\bf P}^{(i+1)})_{j, j}=1$ if $\big(({\bf W}^{(i+1)}\otimes {\bf A}){\bf y}^{(i)}\big)_j\geq 0$, and $({\bf P}^{(i+1)})_{j, j}=a$ elsewhere.

By a recursive application of~(\ref{Equation: GCN layer response}c), we have
{\small
\begin{equation*}\label{Equation: GCN Final}
    {\bf y}^{(l)} = {\bf P}^{(l)}({\bf W}^{(l)}\otimes {\bf A}) \dots {\bf P}^{(2)}({\bf W}^{(2)}\otimes {\bf A}) {\bf P}^{(1)}({\bf W}^{(1)}\otimes {\bf A}) {\bf x}.
\end{equation*}
}
\end{proof}
We drop the transpose from ${\bf W}^{(i+1)}$ in order to avoid cumbersome notation. The singular values of ${\bf W}^{(i+1)}$ are our primary interest thereof our results still hold.

Following Lemma~\ref{Lemma: Linearize GCN}, the next key step in our proving is as follows.

\begin{lemma}\label{Lemma: MI General}
Consider the singular value decomposition {\small ${\bf U}{\bf \Lambda}{\bf V}^T={\bf P}^{(l)}({\bf W}^{(l)}\otimes {\bf A}) ... {\bf P}^{(2)}({\bf W}^{(2)}\otimes {\bf A}) {\bf P}^{(1)}({\bf W}^{(1)}\otimes {\bf A})$} such that {\small $({\bf \Lambda})_{j,j}=\lambda_j({\bf P}^{(l)}({\bf W}^{(l)}\otimes {\bf A}) ... {\bf P}^{(2)}({\bf W}^{(2)}\otimes {\bf A}) {\bf P}^{(1)}({\bf W}^{(1)}\otimes {\bf A}))$}, and let $\tilde{\bf x}={\bf V}^T{\bf x}$. We have 
\begin{equation}
    \begin{split}
        \mathcal{I}({\bf x}; {\bf y}^{(l)}) \stackrel{(1)}{=} \mathcal{I}(\tilde{\bf x}; \Lambda\tilde{\bf x}) &\stackrel{(2)}{\leq}\mathcal{H}(\tilde{\bf x}) \stackrel{(3)}{=}\mathcal{H}({\bf x})
    \end{split}
\end{equation}
\end{lemma}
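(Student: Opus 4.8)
\textbf{Proof proposal for Lemma~\ref{Lemma: MI General}.}

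The plan is to trace the information-theoretic quantities through the linearized representation of ${\bf y}^{(l)}$ given by Lemma~\ref{Lemma: Linearize GCN}, invoking only the invariance of mutual information and entropy under bijections (Preliminary~(\ref{Equation: Information measures for fx})). First I would write down the singular value decomposition of the composite linear map $M := {\bf P}^{(l)}({\bf W}^{(l)}\otimes {\bf A})\cdots {\bf P}^{(1)}({\bf W}^{(1)}\otimes {\bf A})$ as $M = {\bf U}{\bf \Lambda}{\bf V}^T$, so that by Lemma~\ref{Lemma: Linearize GCN} we have ${\bf y}^{(l)} = M{\bf x} = {\bf U}{\bf \Lambda}{\bf V}^T{\bf x} = {\bf U}{\bf \Lambda}\tilde{\bf x}$ where $\tilde{\bf x} = {\bf V}^T{\bf x}$. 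Note that although the entries of the diagonal mask matrices ${\bf P}^{(i)}$ depend on ${\bf y}^{(i-1)}$ and hence on ${\bf x}$, once a realization of ${\bf x}$ is fixed the matrices ${\bf P}^{(i)}$ are determined; to make the map genuinely linear one fixes the sign pattern (equivalently, conditions on the activation region), which is the standard device already used implicitly in Lemma~\ref{Lemma: Linearize GCN}. I would state this reduction explicitly at the outset so that $M$ is an honest deterministic linear operator.

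Next I would establish the three labeled (in)equalities in turn. For step (1): since ${\bf V}$ is orthogonal (hence invertible), the map ${\bf x}\mapsto \tilde{\bf x} = {\bf V}^T{\bf x}$ is bijective, so $\mathcal{I}({\bf x};{\bf y}^{(l)}) = \mathcal{I}(\tilde{\bf x};{\bf y}^{(l)})$ by applying the bijective-invariance of mutual information in its first argument (the analogue of~(\ref{Equation: Information measures for fx})(b) for the first slot). Similarly ${\bf U}$ is orthogonal and invertible, so ${\bf y}^{(l)} = {\bf U}{\bf \Lambda}\tilde{\bf x}$ carries the same information as ${\bf \Lambda}\tilde{\bf x}$, giving $\mathcal{I}(\tilde{\bf x};{\bf y}^{(l)}) = \mathcal{I}(\tilde{\bf x};{\bf \Lambda}\tilde{\bf x})$; combining yields equality (1). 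For step (2): apply~(\ref{Equation: Information measures for fx})(b) with the deterministic function $\tilde{\bf x}\mapsto{\bf \Lambda}\tilde{\bf x}$ to get $\mathcal{I}(\tilde{\bf x};{\bf \Lambda}\tilde{\bf x})\le \mathcal{I}(\tilde{\bf x};\tilde{\bf x}) = \mathcal{H}(\tilde{\bf x})$, and note that this is an equality precisely when ${\bf \Lambda}$ is bijective, i.e.\ all singular values are nonzero — this is the characterization the paper will exploit in the theorems. For step (3): ${\bf V}^T$ is invertible, so $\mathcal{H}(\tilde{\bf x}) = \mathcal{H}({\bf V}^T{\bf x}) = \mathcal{H}({\bf x})$ by~(\ref{Equation: Information measures for fx})(a) applied with the bijection ${\bf V}^T$.

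I expect the only genuine subtlety — the "main obstacle" — to be the measure-theoretic / alphabet bookkeeping around treating $M$ as a fixed linear map despite the ${\bf P}^{(i)}$ depending on the input, and the fact that the Preliminaries state~(\ref{Equation: Information measures for fx}) for scalar functions $f:\mathbb{R}\to\mathbb{R}$ while here we need it for the linear bijections ${\bf V}^T$ and ${\bf U}^T$ on $\mathbb{R}^{nd}$; I would note that the vector version follows coordinate-reasoning or directly from the standard data-processing/invariance facts for bijections on finite alphabets, which is all that is needed since ${\bf x},{\bf y}$ live on finite alphabets by hypothesis. Everything else is a routine chain of invertibility arguments, so the write-up should be short.
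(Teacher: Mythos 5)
Your proposal is correct and follows essentially the same route as the paper: both reduce ${\bf y}^{(l)}$ to ${\bf U}{\bf \Lambda}{\bf V}^T{\bf x}$ via Lemma~\ref{Lemma: Linearize GCN}, strip off the unitary factors using bijective invariance of mutual information (the paper cites the Telatar-style deterministic-channel derivation), and bound $\mathcal{I}(\tilde{\bf x};{\bf \Lambda}\tilde{\bf x})$ by $\mathcal{H}(\tilde{\bf x})=\mathcal{H}({\bf x})$, with equality characterized by invertibility of ${\bf \Lambda}$. Your explicit remark on fixing the activation pattern of the mask matrices ${\bf P}^{(i)}$ and on extending the scalar statement of~(\ref{Equation: Information measures for fx}) to vector bijections is a point the paper leaves implicit, but it does not change the argument.
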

where (1, 3) results from that ${\bf U}$ and ${\bf V}$ are invertible, and equality holds in (2) \textit{iff} ${\bf \Lambda}$ is invertible, i.e., singular values of {\small ${\bf P}^{(l)}({\bf W}^{(l)}\otimes {\bf A}) ... {\bf P}^{(2)}({\bf W}^{(2)}\otimes {\bf A}) {\bf P}^{(1)}({\bf W}^{(1)}\otimes {\bf A})$} are nonzero.

Theorem~\ref{Theorem: GCN information loss}, \ref{Theorem: GCN no information loss}, \ref{Theorem: GCNN Information Loss General} and \ref{Theorem: GCNN Information No Loss General} can easily be inferred from Lemma~\ref{Lemma: MI General}. That is, $\mathcal{I}({\bf x}; {\bf y}^{(l)})=0$ \textit{iff} $\max_j ({\bf \Lambda}^l)_{j, j} =0$ in the asymptotic regime. Similarly, \textit{iff} $\min_j ({\bf \Lambda}^l)_{j, j} >0$, $\mathcal{I}({\bf x}; {\bf y}^{(l)})$ is maximized and given by $\mathcal{H}({\bf x})$, hence  $\mathcal{L}({\bf y}^{(l)})=0$.

In particular Theorem~\ref{Theorem: GCN information loss},~\ref{Theorem: GCNN Information Loss General} and Corollary~\ref{Corollary: GCNN information loss decompA}, i.e., exponential decay to zero, also hold for traditional ReLU with $f: x\rightarrow x^+=\max(0, x)$.

\begin{proof}[Proof of Lemma~\ref{Lemma: MI General}]
Let ${\bf \Sigma}$ be a $n\times n$ matrix with singular value decomposition ${\bf \Sigma} = {\bf U \Lambda V}^T$. Inspired by the derivation for the capacity of deterministic channels introduced by~\cite{Telatar1999CapacityMIMO}, we derive the following
\begin{equation}\label{Equation: Proof of Lemma MI}
    \begin{split}
        \mathcal{I}({\bf x}; {\bf \Sigma x}) &=\mathcal{I}({\bf x}; {\bf U\Lambda V}^T{\bf x})\stackrel{(a)}{=}\mathcal{I}({\bf x}; {\bf \Lambda V}^T{\bf x})\\
        \mathcal{I}({\bf x}; {\bf \Sigma x}) &\stackrel{(b)}{=}\mathcal{I}({\bf V}^T{\bf x}; {\bf \Lambda V}^T{\bf x})\stackrel{(c)}{=}\mathcal{I}(\tilde{\bf x}; {\bf \Lambda}\tilde{\bf x}).
    \end{split}
\end{equation}
(a) and (b) are a result of~(\ref{Equation: Information measures for fx}b) and that ${\bf U}$ and ${\bf V}$ are unitary hence invertible (bijective) transformations. (c) follows from the change of variables $\tilde{{\bf x}} = {\bf V}^T {\bf x}$.

Note that $\mathcal{I}(\tilde{\bf x}; {\bf \Lambda}\tilde{\bf x}) \leq \mathcal{H}({\bf \Lambda}\tilde{\bf x})$. Using~(\ref{Equation: Information measures for fx}a), we further have $\mathcal{H}({\bf \Lambda}\tilde{\bf y})\leq \mathcal{H}(\tilde{\bf x})=\mathcal{H}({\bf x})$ which completes the proof.
\end{proof}

We recall that we are interested in regimes where $\mathcal{I}({\bf x}; {\bf y}^{(l)})=0$ and $\mathcal{L}({\bf y}^{(l)})=0$. In Lemma~\ref{Lemma: MI General}, we show that $\mathcal{I}({\bf x}; {\bf y}^{(l)})=0$ if $\max_j\lambda_j({\bf P}^{(l)}({\bf W}^{(l)}\otimes {\bf A}) \cdots {\bf P}^{(2)}({\bf W}^{(2)}\otimes {\bf A}) {\bf P}^{(1)}({\bf W}^{(1)}\otimes {\bf A}))=0$, and maximized (and given by $\mathcal{H}({\bf x})$) when ${\bf P}^{(l)}({\bf W}^{(l)}\otimes {\bf A}) \cdots {\bf P}^{(2)}({\bf W}^{(2)}\otimes {\bf A}) {\bf P}^{(1)}({\bf W}^{(1)}\otimes {\bf A})$ is invertible. Therefore, maximum and minimum singular values of ${\bf P}^{(l)}({\bf W}^{(l)}\otimes {\bf A}) \cdots {\bf P}^{(2)}({\bf W}^{(2)}\otimes {\bf A}) {\bf P}^{(1)}({\bf W}^{(1)}\otimes {\bf A})$ are of our interest.


\begin{proof}[Proof of Theorem~\ref{Theorem: GCN information loss}]
Let $\sigma_{\bf A} = \max_j\lambda_j({\bf A})$ and $\sigma_{\bf W} = \sup_i \max_j\lambda_j({\bf W}^{(i)})$. That is, given singular values of ${\bf P}^{(i)}$ is in $\{a, 1\}$, $\sup_i \max_j \lambda_j({\bf P}^{(i)}({\bf W}^{(i)}\otimes {\bf A})) = \sigma_{\bf A}\sigma_{\bf W}$. We, moreover, have $\max_j \lambda_j ({\bf P}^{(l)}({\bf W}^{(l)}\otimes {\bf A}) \cdots {\bf P}^{(2)}({\bf W}^{(2)}\otimes {\bf A}) {\bf P}^{(1)}({\bf W}^{(1)}\otimes {\bf A}))\leq (\sigma_{\bf A} \sigma_{\bf W})^l$. Therefore, if $\sigma_{\bf A} \sigma_{\bf W}<1$, by Lemma~\ref{Lemma: MI General} we have $\mathcal{I}({\bf x}; {\bf y}^{(l)})=\mathcal{O}((\sigma_{\bf A} \sigma_{\bf W})^l)$,
and $\lim_{l\rightarrow \infty} \mathcal{I}({\bf x}; {\bf y}^{(l)})=0$. 
\end{proof}

\begin{sloppypar}
\begin{proof}[Proof of Theorem~\ref{Theorem: GCN no information loss}]
We now denote $\gamma_{\bf A}=\min_j \lambda_j({\bf A})$ and $\gamma_{\bf W} = \inf_{i} \min_{j} \lambda_j({\bf W}^{(i)})$. Hence $\inf_i \min_j \lambda_j({\bf P}^{(i)}({\bf W}^{(i)}\otimes {\bf A}))= a\gamma_{\bf A} \gamma_{\bf W}$. Moreover, $\min_j \lambda_j ({\bf P}^{(l)}({\bf W}^{(l)}\otimes {\bf A}) \cdots {\bf P}^{(2)}({\bf W}^{(2)}\otimes {\bf A}) {\bf P}^{(1)}({\bf W}^{(1)}\otimes {\bf A}))\geq (a \gamma_{\bf A} \gamma_{\bf W})^l$. If $a\gamma_{\bf A} \gamma_{\bf W} \geq1$, $\min_j  \lambda_j ({\bf P}_l({\bf W}^{(l)}\otimes {\bf A}) \cdots {\bf P}_{2}({\bf W}^{(2)}\otimes {\bf A}) {\bf P}_{1}({\bf W}^{(1)}\otimes {\bf A}))\geq1$ $\forall l\in \mathbb{N}^+$, hence $\mathcal{I}({\bf x}; {\bf y}^{(l)})=\mathcal{H}({\bf x})$ and $\mathcal{L}({\bf y}^{(l)})=0$ results by Lemma~\ref{Lemma: MI General}.
\end{proof}
\end{sloppypar}

\begin{proof}[Proof of Corollary~\ref{Corollary: GCN Laplacian}]
Let ${\bf D}$ denote the degree matrix such that $({\bf D})_{j, j}= \sum_m ({\bf A})_{j, m}$, and ${\bf L}$ be the associated normalized Laplacian ${\bf L}={\bf D}^{-1/2}{\bf A}{\bf D}^{-1/2}$. Due to the property of normalized Laplacian such that $\max_j \lambda_j({\bf L})=1$, we have $\sigma_{\bf A}=1$. Inserting this into Theorem~\ref{Theorem: GCN information loss}, the corollary results.
\end{proof}

Similarly as in (\ref{Equation: GCN layer response}), ${\bf y}^{(i+1)}$ can be derived from (\ref{Equation: GCNN Single layer}) as follows:
\begin{equation}\label{Equation: GCNN layer response}
    \begin{split}
        {\bf y}^{(i+1)} &{=}  \vectorized\big(\sigma(\sum_k {\bf A}_k{\bf Y}^{(i)}{\bf W}_k^{(i+1)})\big)\stackrel{(a)}{=}   \sigma(\sum_k \vectorized({\bf A}_k{\bf Y}^{(i)}{\bf W}_k^{(i+1)})\big)\\
        {\bf y}^{(i+1)} &\stackrel{(b)}{=}   \sigma(\sum_k ({\bf W}_k^{(i+1)}\otimes {\bf A}_k){\bf y}^{(i)}\sigma)\stackrel{(c)}{=}  {\bf P}^{(i+1)}\sum_k ({\bf W}_k^{(i+1)}\otimes {\bf A}_k){\bf y}^{(i)}
    \end{split}
\end{equation}
where ${\bf P}^{(i+1)}$ is a diagonal matrix with diagonal entries in $\{a, 1\}$ with $a\in (0, 1)$ such that $({\bf P}^{(i)})_{j, j}=1$ if $\big(\sum_k({\bf W}_k^{(i+1)}\otimes {\bf A}){\bf y}^{(i)}\big)_j\geq 0$, and $({\bf P}^{(i)})_{j, j}=a$ otherwise.

Therefore, ${\bf y}^{(l)}$ is given by 
{\small
\begin{equation*}\label{Equation: GCNN all layers}
\begin{split}
    {\bf y}^{(l)} = {\bf P}^{(l)}\sum_{k_l} ({\bf W}_{k_l}^{(l)}\otimes {\bf A}_{k_l})\cdots {\bf P}^{(2)}\sum_{k_2} ({\bf W}_{k_2}^{(2)}\otimes {\bf A}_{k_2}){\bf P}^{(1)}\sum_{k_1} ({\bf W}_{k_1}^{(1)}\otimes {\bf A}_{k_1}){\bf x}.
\end{split}
\end{equation*}
}

Consider (\ref{Equation: Proof of Lemma MI}) where ${\bf \Sigma}$ is replaced with ${\bf P}^{(l)}\sum_{k_l} ({\bf W}_{k_l}^{(l)}\otimes {\bf A}_{k_l})\cdots {\bf P}^{(2)}\sum_{k_2} ({\bf W}_{k_2}^{(2)}\otimes {\bf A}_{k_2}){\bf P}^{(1)}\sum_{k_1} ({\bf W}_{k_1}^{(1)}\otimes {\bf A}_{k_1})$. 

We deduce the followings:

\begin{proof}[Proof of Theorem~\ref{Theorem: GCNN Information Loss General}]
Suppose $\sigma^{(i)}$ denotes the largest singular value of \  ${\bf P}^{(i)}\sum_{k_{i}=1}^K ({\bf W}_{k_{i}}^{(i)}\otimes {\bf A}_{k_{i}})$ such that $\sigma^{(i)} = \max_j \lambda_j\big({\bf P}^{(i)}\sum_{k_{i}} ({\bf W}_{k_{i}}^{(i)}\otimes {\bf A}_{k_{i}})\big)$. Following the same argument as in the proofs of Theorem~\ref{Theorem: GCN information loss} and~\ref{Theorem: GCN no information loss}, Lemma~\ref{Lemma: MI General} implies that if $\sup_i \sigma^{(i)} <1$, then $\mathcal{I}({\bf x}; {\bf y}^{(l)})=\mathcal{O}\big((\sup_i \sigma^{(i)})^l\big)$, and hence $\lim_{l\rightarrow \infty} \mathcal{I}({\bf x}; {\bf y}^{(l)})=0$ results.
\end{proof}

\begin{proof}[Proof of Theorem~\ref{Theorem: GCNN Information No Loss General}]
We now $\gamma^{(i)}$ denote the minimum singular value of \  ${\bf P}^{(i)}\sum_{k_{i}=1}^K ({\bf W}_{k_{i}}^{(i)}\otimes {\bf A}_{k_{i}})$ such that $\gamma^{(i)} = \min_j \lambda_j\big({\bf P}^{(i)}\sum_{k_{i}=1}^K ({\bf W}_{k_{i}}^{(i)}\otimes {\bf A}_{k_{i}})\big)$. By Lemma~\ref{Lemma: MI General}, it immediately follows that if $\inf_i \gamma^{(i)} \geq 1$, then $\forall l\in \mathbb{N}^+$ we have $\mathcal{L}({\bf y}^{(l)})=0$.
\end{proof}

Before we move on to the proofs of Corollary \ref{Corollary: GCNN information loss decompA} and \ref{Corollary: GCNN no information no loss decompA}, we state the following lemma.

\begin{lemma}\label{Lemma: Decomposition}
Let the singular value decomposition of ${\bf A}\in \mathbb{R}^{n\times n}$ is given by ${\bf A}={\bf U}_{\bf A}{\bf S V}_{\bf A}^T$ and we set each ${\bf A}_k$ to ${\bf A}_k = {\bf U}_{\bf A}{\bf S}_k{\bf V}_{\bf A}^T$ with $({\bf S}_k)_{m, m} = \lambda_m({\bf A})$ if $k=m$ and $({\bf S}_k)_{m, m} = 0$ elsewhere. For such specific composition, we argue that singular values of \ $\sum_k {\bf W}_k\otimes {\bf A}_k$ for ${\bf W}_k \in \mathbb{R}^{d\times d}$ is given by $\lambda_k({\bf A})\lambda_j({\bf W}_k)$ for $k=1, 2, \dots, n$ and $j=1, 2, \dots, d$.
\end{lemma}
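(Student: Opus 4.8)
The plan is to strip the common orthogonal factors $\mathbf{U}_{\bf A}$ and $\mathbf{V}_{\bf A}$ out of every summand, which reduces the problem to computing the singular values of a matrix that becomes block diagonal after a harmless reindexing. First I would use the mixed–product rule~(\ref{Equation: Product Kronecker}) to write, for each $k$,
\begin{equation*}
{\bf W}_k\otimes {\bf A}_k \;=\; {\bf W}_k\otimes({\bf U}_{\bf A}{\bf S}_k{\bf V}_{\bf A}^T) \;=\; ({\bf I}_d\otimes {\bf U}_{\bf A})\,({\bf W}_k\otimes {\bf S}_k)\,({\bf I}_d\otimes {\bf V}_{\bf A}^T),
\end{equation*}
where ${\bf I}_d$ is the $d\times d$ identity; this is just two applications of~(\ref{Equation: Product Kronecker}), since ${\bf I}_d{\bf W}_k={\bf W}_k{\bf I}_d={\bf W}_k$. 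Summing over $k$ and pulling the $k$-independent outer factors out of the sum gives $\sum_{k}{\bf W}_k\otimes {\bf A}_k=({\bf I}_d\otimes {\bf U}_{\bf A})\,{\bf M}\,({\bf I}_d\otimes {\bf V}_{\bf A}^T)$ with ${\bf M}:=\sum_{k}{\bf W}_k\otimes {\bf S}_k$. Since ${\bf U}_{\bf A}$ and ${\bf V}_{\bf A}$ are orthogonal, so are ${\bf I}_d\otimes {\bf U}_{\bf A}$ and ${\bf I}_d\otimes {\bf V}_{\bf A}$ (a Kronecker product of orthogonal matrices is orthogonal), so $\sum_{k}{\bf W}_k\otimes {\bf A}_k$ and ${\bf M}$ have exactly the same singular values.

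Next I would make ${\bf M}$ explicit. By construction ${\bf S}_k=\lambda_k({\bf A})\,{\bf e}_k{\bf e}_k^T$, where ${\bf e}_k$ is the $k$-th standard basis vector of $\mathbb{R}^n$. Indexing rows and columns of an $nd\times nd$ matrix by pairs $(i,m)\in\{1,\dots,d\}\times\{1,\dots,n\}$ in the Kronecker order, the $\big((i,m),(i',m')\big)$ entry of ${\bf W}_k\otimes {\bf S}_k$ equals $({\bf W}_k)_{i i'}\,\lambda_k({\bf A})$ when $m=m'=k$ and is $0$ otherwise. Summing over $k$, the $\big((i,m),(i',m')\big)$ entry of ${\bf M}$ is $\lambda_m({\bf A})({\bf W}_m)_{i i'}$ if $m=m'$ and $0$ otherwise. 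Hence, after the perfect–shuffle permutation that groups coordinates by their $m$-index, ${\bf M}$ is the block-diagonal matrix $\bigoplus_{m=1}^{n}\lambda_m({\bf A}){\bf W}_m$; equivalently, ${\bf M}$ is permutation-similar — and therefore, permutation matrices being orthogonal, orthogonally equivalent — to this direct sum, so again the singular values are preserved.

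Finally I would read off the singular values. The singular values of a block-diagonal matrix are the union, with multiplicity, of the singular values of its diagonal blocks; and since $\lambda_m({\bf A})\geq 0$ (it is a singular value of ${\bf A}$), the singular values of $\lambda_m({\bf A}){\bf W}_m$ are $\lambda_m({\bf A})\lambda_j({\bf W}_m)$ for $j=1,\dots,d$ — this is the scaling identity $\lambda_j(c{\bf W})=c\,\lambda_j({\bf W})$ for $c\geq 0$, and can also be obtained from the Kronecker singular-value fact in the Preliminaries applied to the $1\times1$ case $(\lambda_m({\bf A}))\otimes {\bf W}_m$. Collecting over $m=1,\dots,n$ yields exactly the claimed multiset $\{\lambda_k({\bf A})\lambda_j({\bf W}_k):k=1,\dots,n,\ j=1,\dots,d\}$, which has the correct cardinality $nd$.

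The main obstacle I expect is purely organizational: getting the index bookkeeping for the Kronecker ordering and the perfect–shuffle permutation exactly right, so that the statement "${\bf M}$ is block diagonal with blocks $\lambda_m({\bf A}){\bf W}_m$" is literally correct rather than off by a transpose or a relabeling of blocks. The algebraic ingredients — the mixed-product rule, invariance of singular values under left/right multiplication by orthogonal matrices and under permutation similarity, and the singular values of a block-diagonal (and of a scaled) matrix — are all standard. A secondary point worth a sentence in the writeup is that $\lambda_k({\bf A})$ is nonnegative, which is what makes the scaling identity for singular values valid; this is automatic since singular values are nonnegative by convention.
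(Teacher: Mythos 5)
Your proof is correct, and it takes a genuinely different and substantially shorter route than the paper's. The paper begins by inserting the SVD of each ${\bf W}_k$, so the mixed-product rule produces outer factors ${\bf U}_{{\bf W}_k}\otimes {\bf U}_{\bf A}$ and ${\bf V}_{{\bf W}_k}^T\otimes {\bf V}_{\bf A}^T$ that depend on $k$ and therefore cannot be pulled out of the sum; to recover a valid SVD of $\sum_k {\bf W}_k\otimes {\bf A}_k$ it must then introduce the mask matrices ${\bf M}_k$ and spend several pages verifying by explicit inner-product computations that $\tilde{\bf U}=\sum_k({\bf U}_{{\bf W}_k}\otimes {\bf U}_{\bf A}){\bf M}_k$ and $\tilde{\bf V}$ are unitary. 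Your key observation — write ${\bf W}_k={\bf I}_d{\bf W}_k{\bf I}_d$ so that the orthogonal factors ${\bf I}_d\otimes{\bf U}_{\bf A}$ and ${\bf I}_d\otimes{\bf V}_{\bf A}^T$ are $k$-independent and factor out of the entire sum — collapses all of that into the invariance of singular values under orthogonal equivalence, leaving only the elementary identification of ${\bf M}=\sum_k{\bf W}_k\otimes{\bf S}_k$ as permutation-similar to $\bigoplus_m \lambda_m({\bf A}){\bf W}_m$ via ${\bf S}_k=\lambda_k({\bf A}){\bf e}_k{\bf e}_k^T$. What the paper's heavier construction buys is an explicit unitary pair $(\tilde{\bf U},\tilde{\bf V})$ diagonalizing the sum, i.e.\ a full SVD; your argument delivers only the multiset of singular values, but that is exactly what the lemma asserts and all that Corollaries~\ref{Corollary: GCNN information loss decompA} and~\ref{Corollary: GCNN no information no loss decompA} use, and an explicit SVD could be recovered from your block-diagonal form by taking SVDs blockwise if ever needed. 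Your two flagged caveats (the Kronecker index bookkeeping and the nonnegativity of $\lambda_k({\bf A})$) are handled correctly.
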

\begin{proof}[Proof of Lemma~\ref{Lemma: Decomposition}]
Let the singular value decomposition of ${\bf W}_k$ be ${\bf W}_k={\bf U}_{{\bf W}_k}{\bf S}_{{\bf W}_k}{\bf V}_{{\bf W}_k}^T$. By the property of kronecker product, we have
{\small
\begin{equation*}
    \sum_k {\bf W}_k\otimes {\bf A}_k = \sum_k ({\bf U}_{{\bf W}_k}\otimes {\bf U}_{\bf A})({\bf S}_{{\bf W}_k}\otimes {\bf S}_k)({\bf V}_{{\bf W}_k}^T\otimes {\bf V}_{\bf A}^T).
\end{equation*}
}
Next, we define a set of $nd\times nd$ mask matrices ${\bf M}_k$ such that $({\bf M}_k)_{i, i'}=1$ if $i=i'$ and $i$ (hence $i'$) is of the form $i=k + (j-1)n$ for $j=1, 2, \dots, d$, and $({\bf M}_k)_{i, i'}=0$ otherwise. Reminding that $({\bf S}_k)_{m, m} = \lambda_m({\bf A})$ if $k=m$ and $({\bf S}_k)_{m, m} = 0$ elsewhere, above equation can be rewritten as
{\small
\begin{equation*}
\begin{split}
    \sum_k {\bf W}_k\otimes {\bf A}_k=\sum_k ({\bf U}_{{\bf W}_k}\otimes {\bf U}_{\bf A}){\bf M}_k({\bf S}_{{\bf W}_k}\otimes {\bf S}_k){\bf M}_k({\bf V}_{{\bf W}_k}^T\otimes {\bf V}_{\bf A}^T).
    \end{split}
\end{equation*}
}
In other words, the mask matrix ${\bf M}_k$ applies on the columns (rows) of ${\bf U}_{{\bf W}_k}\otimes {\bf U}_{\bf A}$ $({\bf V}_{{\bf W}_k}^T\otimes {\bf V}_{\bf A}^T)$ where the respective diagonal entries of $({\bf S}_{{\bf W}_k}\otimes {\bf S}_k)$ are nonzero.

Next, we note that if $k=k'$, ${\bf M}_{k}{\bf M}_{k'}={\bf M}_{k}$, and ${\bf M}_{k}$ and ${\bf M}_{k'}$ are orthogonal for $k\neq k'$. This leads us to 
{\small
\begin{equation*}
\begin{split}
    ({\bf U}_{{\bf W}_k}\otimes {\bf U}_{\bf A}){\bf M}_k({\bf S}_{{\bf W}_k}&\otimes {\bf S}_k){\bf M}_k({\bf V}_{{\bf W}_k}^T\otimes {\bf V}_{\bf A}^T)\\
    &=    \sum_{k'} ({\bf U}_{{\bf W}_{k'}}\otimes {\bf U}_{\bf A}){\bf M}_{k'} ({\bf S}_{{\bf W}_{k}}\otimes {\bf S}_k)  \sum_{k''} ({\bf V}_{{\bf W}_{k''}}^T\otimes {\bf V}_{\bf A}^T){\bf M}_{k''}.
\end{split}
\end{equation*}
}
By defining $\tilde{\bf U} =  \sum_k ({\bf U}_{{\bf W}_k}\otimes {\bf U}_{\bf A}){\bf M}_k$ and $\tilde{\bf V} =  \sum_k {\bf M}_k ({\bf V}_{{\bf W}_k}^T\otimes {\bf V}_{\bf A}^T)$ and using the above equation, we resume $\sum_k {\bf W}_k\otimes {\bf A}_k$ as 
\begin{equation}\label{Equation: Lemma 3 main}
    \sum_k {\bf W}_k\otimes {\bf A}_k =  \tilde{\bf U} \sum_k({\bf S}_{{\bf W}_k}\otimes {\bf S}_k) \tilde{\bf V}^T.
\end{equation}

Next, we will show that $\tilde{\bf U}$ and $\tilde{\bf V}$ are unitary matrices through proving that $\tilde{\bf U}\tilde{\bf U}^T=\tilde{\bf U}^T\tilde{\bf U}=\bf{I}$ and $\tilde{\bf V}^T\tilde{\bf V}=\tilde{\bf V}\tilde{\bf V}^T=\bf{I}$. To avoid repeating the same procedure, we will only show it for $\tilde{\bf U}$, but the same result also holds for $\tilde{\bf V}$.

First, we show that {\bf (A.1)} $\tilde{\bf U}\tilde{\bf U}^T=\bf{I}$, and then {\bf (A.2)} $\tilde{\bf U}^T\tilde{\bf U}=\bf{I}$ to argue that $\tilde{\bf U}$ (and $\tilde{\bf V}$) is unitary.

{\bf (A.1)} We can simplify $\tilde{\bf U}\tilde{\bf U}^T$ as
{\small
\begin{equation}\label{Equation: UU^T}
    \begin{split}
       \tilde{\bf U}\tilde{\bf U}^T &= \sum_{k} \big(({\bf U}_{{\bf W}_{k}}\otimes {\bf U}_{\bf A}){\bf M}_{k}\big)  \sum_{k'} \big(({\bf U}_{{\bf W}_{k'}}\otimes {\bf U}_{\bf A}){\bf M}_{k'}\big)^T\\
       \tilde{\bf U}\tilde{\bf U}^T &{=} \sum_{k, k'} \big(({\bf U}_{{\bf W}_{k}}\otimes {\bf U}_{\bf A}){\bf M}_{k}\big) \big(({\bf U}_{{\bf W}_{k'}}\otimes {\bf U}_{\bf A}){\bf M}_{k'}\big)^T\\
       \tilde{\bf U}\tilde{\bf U}^T &\stackrel{(a)}{=} \sum_{k} \big(({\bf U}_{{\bf W}_{k}}\otimes {\bf U}_{\bf A}){\bf M}_{k}\big) \big(({\bf U}_{{\bf W}_{k}}\otimes {\bf U}_{\bf A}){\bf M}_{k}\big)^T
    \end{split}
\end{equation}}
where (a) follows from the orthogonality of ${\bf M}_{k}$ and ${\bf M}_{k'}$ for $k\neq k'$.

We will now take a closer look at $\sum_{k} \big(({\bf U}_{{\bf W}_{k}}\otimes {\bf U}_{\bf A}){\bf M}_{k}\big) \big(({\bf U}_{{\bf W}_{k}}\otimes {\bf U}_{\bf A}){\bf M}_{k}\big)^T$. The entries of summands, $\big(({\bf U}_{{\bf W}_{k}}\otimes {\bf U}_{\bf A}){\bf M}_{k}\big) \big(({\bf U}_{{\bf W}_{k}}\otimes {\bf U}_{\bf A}){\bf M}_{k}\big)^T$, are equivalent to inner product between the rows of $({\bf U}_{{\bf W}_{k}}\otimes {\bf U}_{\bf A}){\bf M}_{k}$ for a fixed $k$. Recall that for a fixed $k$, the mask matrix satisfies $({\bf M}_k)_{i, i}=1$ if $k$ is of the form $i=k+(j-1)n$ for $j=1, 2, \cdots, d$,  and $({\bf M}_k)_{i, i}=0$ elsewhere. We now define $i_\omega$ and $i_\alpha$ as indices such that $i_\omega = \floor{i/n}+1$ and $i_\alpha=\mod(i, \floor{i/n})$. Similarly, let $i'_\omega = \floor{i'/n}+1$ and $i'_\alpha=\mod(i', \floor{i'/n})$. 

Following above definitions, a moment of thought reveals that the nonzero entries of $i$th row of $\big(({\bf U}_{{\bf W}_{k}}\otimes {\bf U}_{\bf A}){\bf M}_{k}\big)$ is given by $({\bf U}_{{\bf W}_k})_{i_\omega, [m=1, 2, \dots, d]}({\bf U}_{\bf A})_{i_\alpha,k}$. We therefore investigate $(\tilde{\bf U}\tilde{\bf U}^T)_{i, i'}$  i.e., the inner product between $i$th and $i'$th rows of $\big(({\bf U}_{{\bf W}_{k}}\otimes {\bf U}_{\bf A}){\bf M}_{k}\big)$ summed over all $k=1, 2, \dots, n$. To start, the inner product between $i$th and $i'$th rows of $\big(({\bf U}_{{\bf W}_{k}}\otimes {\bf U}_{\bf A}){\bf M}_{k}\big)$ is as follows
{\small
\begin{equation}\label{Equation: Inner Product Ukron}
\begin{split}
    \langle& [({\bf U}_{{\bf W}_k})_{i_\omega, [m=1, 2, \dots, d]}({\bf U}_{\bf A})_{i_\alpha,k}], [({\bf U}_{{\bf W}_k})_{i'_\omega, [m=1, 2, \dots, d]}({\bf U}_{\bf A})_{i'_\alpha,k}]\rangle\\
    &=\sum_m ({\bf U}_{{\bf W}_k})_{i_\omega, m}({\bf U}_{\bf A})_{i_\alpha,k}({\bf U}_{{\bf W}_k})_{i'_\omega, m}({\bf U}_{\bf A})_{i'_\alpha,k}\\
    &=\sum_m ({\bf U}_{{\bf W}_k})_{i_\omega, m}({\bf U}_{{\bf W}_k})_{i'_\omega, m}({\bf U}_{\bf A})_{i_\alpha,k}({\bf U}_{\bf A})_{i'_\alpha,k}=({\bf U}_{\bf A})_{i_\alpha,k}({\bf U}_{\bf A})_{i'_\alpha,k}\sum_m ({\bf U}_{{\bf W}_k})_{i_\omega, m}({\bf U}_{{\bf W}_k})_{i'_\omega, m}.
    \end{split}
\end{equation}}
Let now analyze the cases when (1) $i\neq i'$, and (2) $i=i'$.

Assume (1). If further $i_\omega\neq i'_\omega$, it is immediate that $\sum_m ({\bf U}_{{\bf W}_k})_{i_\omega, m}({\bf U}_{{\bf W}_k})_{i'_\omega, m}=0$ by the fact that ${\bf U}_{{\bf W}_k}$ is unitary, hence 
{\small 
\begin{equation*}
    \begin{split}
       \langle [({\bf U}_{{\bf W}_k})_{i_\omega, [m=1, 2, \dots, d]}({\bf U}_{\bf A})_{i_\alpha,k}], [({\bf U}_{{\bf W}_k})_{i'_\omega, [m=1, 2, \dots, d]}({\bf U}_{\bf A})_{i'_\alpha,k}]\rangle=0 
    \end{split}
\end{equation*}}

For (1), if $i_\omega= i'_\omega$, we have $i_\alpha \neq i'_\alpha$. Further, $\sum_m ({\bf U}_{{\bf W}_k})_{i_\omega, m}({\bf U}_{{\bf W}_k})_{i'_\omega, m} = 1$ and hence 
{\small
\begin{equation}
    \begin{split}
        \langle [({\bf U}_{{\bf W}_k})_{i_\omega, [m=1, 2, \dots, d]}&({\bf U}_{\bf A})_{i_\alpha,k}], [({\bf U}_{{\bf W}_k})_{i'_\omega, [m=1, 2, \dots, d]}({\bf U}_{\bf A})_{i'_\alpha,k}]\rangle\\
        &=({\bf U}_{\bf A})_{i_\alpha,k}({\bf U}_{\bf A})_{i'_\alpha,k}\sum_m ({\bf U}_{{\bf W}_k})_{i_\omega, m}({\bf U}_{{\bf W}_k})_{i'_\omega, m}\\
        &=({\bf U}_{\bf A})_{i_\alpha,k}({\bf U}_{\bf A})_{i'_\alpha,k}.
    \end{split}
\end{equation}
}
Hence, the inner product between $i$th and $i'$th rows of $\big(({\bf U}_{{\bf W}_{k}}\otimes {\bf U}_{\bf A}){\bf M}_{k}\big)$ is given by $({\bf U}_{\bf A})_{i_\alpha,k}({\bf U}_{\bf A})_{i'_\alpha,k}$. Recalling (\ref{Equation: UU^T}), we have $(\tilde{\bf U}\tilde{\bf U}^T)_{i,i'} = \sum_{k} ({\bf U}_{\bf A})_{i_\alpha,k}({\bf U}_{\bf A})_{i'_\alpha,k}$. As previously mentioned we have $i_\alpha \neq i'_\alpha$. By the unitary property of ${\bf U}_{\bf A}$, we further have $(\tilde{\bf U}\tilde{\bf U}^T)_{i,i'} = \sum_{k} ({\bf U}_{\bf A})_{i_\alpha,k}({\bf U}_{\bf A})_{i'_\alpha,k}=0$.

So far we have shown that $(\tilde{\bf U}\tilde{\bf U}^T)_{i, i'}=0$ when $i\neq i'$. Let now $i=i'$, i.e., (2). IT follows from (\ref{Equation: Inner Product Ukron}) that
{\small
\begin{equation}\label{Equation: UU^T same index}
    \begin{split}
        (\tilde{\bf U}\tilde{\bf U}^T)_{i, i} \stackrel{(a)}{=} \sum_k ({\bf U}_{\bf A})^2_{i_\alpha,k}\sum_m({\bf U}_{{\bf W}_k})^2_{i_\omega, m}        (\tilde{\bf U}\tilde{\bf U}^T)_{i, i} \stackrel{(b)}{=} \sum_k ({\bf U}_{\bf A})^2_{i_\alpha,k}1
        (\tilde{\bf U}\tilde{\bf U}^T)_{i, i} \stackrel{(c)}{=} 1
    \end{split}
\end{equation}
}
where (a) results from that ${\bf U}_{{\bf W}_k}$ is unitary, and (b) follows from that ${\bf U}_{{\bf A}}$ is unitary. Combining~above arguments and (\ref{Equation: UU^T same index}), we have $\tilde{\bf U}\tilde{\bf U}^T=\bf{I}$.

{\bf (A.2)} Next, we show that $\tilde{\bf U}^T\tilde{\bf U}=\bf{I}$. We begin with
{\small
\begin{equation}
\label{Equation: U^TU main}
    \begin{split}
        \tilde{\bf U}^T\tilde{\bf U} &= \sum_{k} \big(({\bf U}_{{\bf W}_{k}}\otimes {\bf U}_{\bf A}){\bf M}_{k}\big)^T  \big(\sum_{k'} ({\bf U}_{{\bf W}_{k'}}\otimes {\bf U}_{\bf A}){\bf M}_{k'}\big)\tilde{\bf U}^T\tilde{\bf U}\\
        &=\sum_{k, k'} \big(({\bf U}_{{\bf W}_{k}}\otimes {\bf U}_{\bf A}){\bf M}_{k}\big)^T  \big(({\bf U}_{{\bf W}_{k'}}\otimes {\bf U}_{\bf A}){\bf M}_{k'}\big).
    \end{split}
\end{equation}
}
For $k\neq k'$,
{\small
\begin{equation}
    \begin{split}
    \Big( \big(({\bf U}_{{\bf W}_{k}}\otimes {\bf U}_{\bf A}){\bf M}_{k}\big)^T  \big(({\bf U}_{{\bf W}_{k'}}\otimes {\bf U}_{\bf A}){\bf M}_{k'}\big)\Big)_{i, i'}=\langle \big(({\bf U}_{{\bf W}_{k}}\otimes {\bf U}_{\bf A}){\bf M}_{k}\big)_i, \big(({\bf U}_{{\bf W}_{k'}}\otimes {\bf U}_{\bf A}){\bf M}_{k'}\big)_{i'}\rangle.
   \end{split}
\end{equation}
}

Note that, due to the orthogonality of ${\bf M}_{k}$ and ${\bf M}_{k}$ for $k\neq k'$, we further have $\langle \big(({\bf U}_{{\bf W}_{k}}\otimes {\bf U}_{\bf A}){\bf M}_{k}\big)_i, \big(({\bf U}_{{\bf W}_{k'}}\otimes {\bf U}_{\bf A}){\bf M}_{k'}\big)_{i'}\rangle=0$ for $i\neq i'$. When $i=i'$, on the other hand, we have

{
\begin{equation}
    \begin{split}
    & \Big( \big(({\bf U}_{{\bf W}_{k}}\otimes {\bf U}_{\bf A}){\bf M}_{k}\big)^T  \big(({\bf U}_{{\bf W}_{k'}}\otimes {\bf U}_{\bf A}){\bf M}_{k'}\big)\Big)_{i, i'}\\ 
    &=\langle \big(({\bf U}_{{\bf W}_{k}}\otimes {\bf U}_{\bf A}){\bf M}_{k}\big)_i, \big(({\bf U}_{{\bf W}_{k'}}\otimes {\bf U}_{\bf A}){\bf M}_{k'}\big)_i\rangle\\
    &\stackrel{(a)}{=} \langle ({\bf U}_{{\bf W}_k})_{[z=1, \cdots, d], i_\omega}({\bf U}_{\bf A})_{[w=1, \cdots, n], k}, ({\bf U}_{{\bf W}_{k'}})_{[z=1, \cdots, d], i_\omega}({\bf U}_{\bf A})_{[w=1, \cdots, n], k'}\rangle\\
     &= \sum_{w} \sum_d ({\bf U}_{{\bf W}_k})_{z, i_\omega}({\bf U}_{\bf A})_{w, k}({\bf U}_{{\bf W}_{k'}})_{z, i_\omega}({\bf U}_{\bf A})_{w, k'}\\
     &\stackrel{(b)}{=} \sum_d ({\bf U}_{{\bf W}_k})_{z, i_\omega}({\bf U}_{{\bf W}_{k'}})_{z, i_\omega}\sum_w ({\bf U}_{\bf A})_{w, k}({\bf U}_{\bf A})_{w, k'}\\
     &= 0
   \end{split}
\end{equation}
}
where (a) follows from that $ \big(({\bf U}_{{\bf W}_{k}}\otimes {\bf U}_{\bf A}){\bf M}_{k}\big)_i=({\bf U}_{{\bf W}_k})_{[z=1, \cdots, d], i_\omega}({\bf U}_{\bf A})_{[w=1, \cdots, n], k}$ and (b) results from that $\sum_w ({\bf U}_{\bf A})_{w, k}({\bf U}_{\bf A})_{w, k'}=0$ for $k\neq k'$ as ${\bf U}_{\bf A}$ is unitary.

Therefore, (\ref{Equation: U^TU main}) can be resumed as
{\small
\begin{equation*}
    \begin{split}
        \tilde{\bf U}^T\tilde{\bf U}&=\sum_{k} \big(({\bf U}_{{\bf W}_{k}}\otimes {\bf U}_{\bf A}){\bf M}_{k}\big)^T  \big(({\bf U}_{{\bf W}_{k}}\otimes {\bf U}_{\bf A}){\bf M}_{k}\big)\\
        \tilde{\bf U}^T\tilde{\bf U}&= \sum_k{\bf M}_k({\bf U}_{{\bf W}_{k}}\otimes {\bf U}_{\bf A})^T ({\bf U}_{{\bf W}_{k}}\otimes {\bf U}_{\bf A}) {\bf M}_k\\
        \tilde{\bf U}^T\tilde{\bf U}&\stackrel{(a)}{=} \sum_k {\bf M}_k\bf{I} {\bf M}_k=\sum_k {\bf M}_k \stackrel{(b)}{=} \bf{I}
    \end{split}
\end{equation*}}
where (a) follows from that the kronecker product of unitary matrices is also unitary, hence $({\bf U}_{{\bf W}_k}\otimes {\bf U}_{\bf A})$ is unitary, and (b) follows from the definition of ${\bf M}_k$.

As the last step, recall from (\ref{Equation: Lemma 3 main}) that $\sum_k {\bf W}_k\otimes {\bf A}_k =  \tilde{\bf U} \sum_k({\bf S}_{{\bf W}_k}\otimes {\bf S}_k) \tilde{\bf V}^T$, and note by the definition of ${\bf S}_k$ that $({\bf S}_{{\bf W}_k}\otimes {\bf S}_k)_{i, i'}=\lambda_k({\bf A})\lambda_j({\bf S}_{{\bf W}_k})$ if $i=i'$ and $i$, hence $i'$, of the form $i=k+(j-1)n$ for $j=1, 2, \cdots, d$, and $({\bf S}_{{\bf W}_k}\otimes {\bf S}_k)_{i, i'}=0$ elsewhere. Therefore, by the fact that $({\bf S}_{{\bf W}_k}\otimes {\bf S}_k)({\bf S}_{{\bf W}_{k'}}\otimes {\bf S}_{k'})=0$ for $k\neq k'$, it follows that $\sum_k({\bf S}_{{\bf W}_k}\otimes {\bf S}_k)$ is a diagonal matrix with diagonal entries $\lambda_k({\bf A})\lambda_j({\bf S}_{{\bf W}_k})$ where $j=1, 2, \cdots, d$ and $k=1, 2, \cdots, n$, which completes the proof.
\end{proof}


For the decomposition of ${\bf A}$ such that ${\bf A}_k={\bf U}_{\bf A}{\bf S}_k{\bf V}_{\bf A}^T$ where the singular value decomposition of ${\bf A}$ is given by ${\bf A}={\bf U}_{\bf A}{\bf S V}_{\bf A}^T$, we recall Theorem~\ref{Theorem: GCNN Information Loss General} and \ref{Theorem: GCNN Information No Loss General} to conclude Corollary~\ref{Corollary: GCNN information loss decompA} and \ref{Corollary: GCNN no information no loss decompA} as follows.

\begin{sloppypar}
\begin{proof}[Proof of Corollary~\ref{Corollary: GCNN information loss decompA}]
Let $\sigma_{{\bf A}_k} = \lambda_k({\bf A})$ and $\sigma_{{\bf W}_k}=\sup_i\max_j\lambda_j({\bf W}_k^{(i)})$. By Lemma~\ref{Lemma: Decomposition}, we have $\max_j \lambda_j(\sum_k ({\bf W}^{(i)}_k\otimes {\bf A}_k))\leq \max_k \sigma_{{\bf A}_k}\sigma_{{\bf W}_k}$. Noting that ${\bf P}^{(i)}$ is diagonal with entries at most 1, we have $\max_j \lambda_j\big({\bf P}^{(l)}\sum_{k_l} ({\bf W}_{k_l}^{(l)}\otimes {\bf A}_{k_l})\cdots {\bf P}^{(2)}\sum_{k_2} ({\bf W}_{k_2}^{(2)}\otimes {\bf A}_{k_2}){\bf P}^{(1)}\sum_{k_1} ({\bf W}_{k_1}^{(1)}\otimes {\bf A}_{k_1})\big)\leq (\max_k \sigma_{{\bf A}_k}\sigma_{{\bf W}_k})^l$. Therefore, if $\underline{\forall k=\{1, 2, \dots, n\}}$ $\sigma_{{\bf A}_k}\sigma_{{\bf W}_k}<1$, then $\lim_{l\rightarrow \infty}\max_j\lambda_j\big(\sum_k ({\bf W}^{(i)}_k\otimes {\bf A}_k)\big)=0$. Hence $\lim_{l \rightarrow \infty} \mathcal{I}({\bf x}; {\bf y}^{(l)})=0$ results by Lemma~\ref{Lemma: MI General}.
\end{proof}
\end{sloppypar}

\begin{sloppypar}
\begin{proof}[Proof of Corollary~\ref{Corollary: GCNN no information no loss decompA}]
Let $\gamma_{{\bf W}_k} = \inf_i \min_j \lambda_j({\bf W}_k^{(i)})$. Note that $\min_j \lambda_j \big( {\bf P}^{(i)}\sum_k {\bf W}_k^{(i)}\otimes {\bf A}_k\big)\geq a\min_k \lambda_k({\bf A})\gamma_{{\bf W}_k}$ by Lemma~\ref{Lemma: Decomposition} and that $\min_j \lambda_j({\bf P}^{i})=a$. Moreover, $\min_j \lambda_j\big({\bf P}^{(l)}\sum_{k_l} ({\bf W}_{k_l}^{(l)}\otimes {\bf A}_{k_l})\cdots {\bf P}^{(2)}\sum_{k_2} ({\bf W}_{k_2}^{(2)}\otimes {\bf A}_{k_2}){\bf P}^{(1)}\sum_{k_1} ({\bf W}_{k_1}^{(1)}\otimes {\bf A}_{k_1})\big) \geq ( a\min_k \lambda_k({\bf A})\gamma_{{\bf W}_k})^l$. Therefore, if $a\sigma_{{\bf A}_k}\gamma_{{\bf W}_k}\geq1$, $\underline{\forall k \in \{1, 2, \dots, n\}}$, then $\mathcal{I}({\bf x}; {\bf y}^{(l)})=\mathcal{H}({\bf x})$ $\forall l\in \mathbb{N}^+$ by Lemma~\ref{Lemma: MI General}, hence $\mathcal{L}({\bf y}^{(l)}) = 0$.
\end{proof}
\end{sloppypar}

\section{Datasets description} 
\label{apdx:dataset}
Cora, Citeseer, and Pubmed\footnote{https://github.com/tkipf/gcn/tree/master/gcn/data} are three well-known citation network datasets, and we follows the same training/validation/test split as GCN~\cite{DBLP:conf/iclr/KipfW17}.
Reddit is a social network dataset modeling the community structure of Reddit posts. This dataset is often used as an inductive training setting and the training/validation/test split is coherent with that of GraphSAGE~\cite{hamilton2017inductive}.
Flickr originates from NUS-wide~\footnote{http://lms.comp.nus.edu.sg/research/NUS-WIDE.html} and contains different types of images based on the descriptions and common properties of online images. We use a public version of Reddit and Flickr provided by GraphSAINT\footnote{https://github.com/GraphSAINT/GraphSAINT}.

Amazon Computers and Amazon Photo are segments of the Amazon co-purchase graph~\cite{DBLP:conf/sigir/McAuleyTSH15}, where nodes represent goods, edges indicate that two goods are frequently bought together, node features are bag-of-words encoded product reviews, and class labels are given by the product category.
Coauthor CS and Coauthor Physics are co-authorship graphs based on the Microsoft Academic Graph from the KDD Cup 2016 challenge\footnote{https://kddcup2016.azurewebsites.net/}. Here, nodes are authors, that are connected by an edge if they co-authored a paper; node features represent paper keywords for each author’s papers, and class labels indicate most active fields of study for each author. We use a pre-divided version of these datasets through the Deep Graph Library (DGL)\footnote{https://docs.dgl.ai/en/0.4.x/api/python/data.html\#coauthor-dataset}. Actor is an actor-only induced subgraph of the film-director-actor-writer network~\cite{DBLP:conf/kdd/TangSWY09}. Each nodes correspond to an actor, and the edge between two nodes denotes co-occurrence on the same Wikipedia page. Node features correspond to some keywords in the Wikipedia pages.\footnote{https://github.com/graphdml-uiuc-jlu/geom-gcn/tree/master/new\_data/film}

The company dataset is a real-world transaction graph which we used for fraud transactions detection. Historical transaction records spanning a given period of time were extracted for graph construction. We treat each transaction as a node and assume there is an edge between two nodes if they have the same hard linkage, such as purchasing by the same buyer, shipping to the same address or using the same financial instruments etc. Node features are constructed from individual risk factors. To reduce graph size and meanwhile preserve graph connectivity, we adopt a graph sampling strategy: firstly, all fraudulent transactions and random sampled normal transactions are selected as seeds; secondly, each seed is expanded to its 3-hop neighbors, at each hop, no more than 32 neighbors are picked. Thirdly, those groups with transaction numbers less than 5 are filtered out. There are two different sizes of transaction graph (company dataset small and large) vary in transaction spanning periods and number of individual features.

\section{Experiments setup}

\subsection{Experiments setup for the experiments on arbitrary graph-structured data}
\label{apdx:setup}
The hyperparameters (e.g., learning rate, number of hidden units) are selected from grid search. The grid search was performed over the following search space:

\begin{itemize}
    \item Hidden size: \texttt{[8, 16, 32, 64, 128, 256, 512]}
    \item Learning rate: \texttt{[0.001, 0.003, 0.005, 0.008, 0.01]}
    \item Partition numbers $K$: \texttt{[2,3,4,5,6,7,8]}
    \item Parameter $p$ in METIS: \texttt{[40,80,100,150,180,200,250,500,1000]}
    \item Dropout probability: \texttt{[0.2, 0.3, 0.4, 0.5, 0.6, 0.7, 0.8, 0.85, 0.9]}
    \item $L_2$ regularization strength: \texttt{[1e-4, 5e-4, 1e-3, 5e-3, 1e-2, 5e-2, 1e-1]}
    \item Attention coefficients dropout probability (only for GAT):
    \texttt{[0.2, 0.3, 0.4, 0.5, 0.6, 0.7, 0.8]}
\end{itemize}

\paragraph{Random decomposition settings}
In this experiment, we evaluated on a standard two-layer GCN on Cora. All layers are randomly decomposed with the same parameter $K$. In particular, the random decomposition distributes all the edges to the $K$ subgraphs in a round-robin manner and there are no common edges among these subgraphs. Other hyperparameters are selected from grid search.

\paragraph{Results in Table 1}
In the table, $^*$ indicates that we ran our own implementation. JK-Net, ResGCN, DenseGCN and the DeGNN version in our implementation are built on top of standard GCN on both transductive and inductive settings. Most of the other testing accuracy are directly collected from the corresponding original paper. Except for 1) DropEdge on citation datasets reused from the openreview results\footnote{https://openreview.net/forum?id=Hkx1qkrKPr}, 2) GraphSAGE on Flickr reused from GraphSAINT~\cite{DBLP:conf/iclr/ZengZSKP20}. On the inductive tasks, DropEdge was not evaluated on Flickr in the original paper, thereby, we also ran an experiment with the codes\footnote{https://github.com/DropEdge/DropEdge}. Note that, we cannot reproduce the experimental results for DropEdge+JKNet on Reddit, as reported in their paper, according to its setting. To make it fair, all DropEdge results in our experiments indicate using GCN as the backbones by default.

\paragraph{DeGNN Implementation Settings} 
We use PyTorch to implement the models and we train them using Adam optimizer. Besides, we train each model 400 epochs and terminate the training process if the validation accuracy does not improve for 20 consecutive steps. Note that JK-Net has three aggregators, and we choose the concatenation as the final aggregation layer since it performs best in most cases. Every experiment is ran ten times and the mean accuracy is reported.
For inductive tasks, the training procedure is on the training set. The validation set and testing set are added into the graph only for the prediction. Therefore, we need not only perform decomposition on the training graph, but also continue to decompose the whole graph for new nodes and edges. The second decomposition reuses the same $K$ decomposition and the trained DeGNN model to make predictions. We notice that DropEdge utilizes a self feature modeling~\cite{DBLP:conf/nips/FoutBSB17} operation on GCN in their implementation, which is fundamental to the reported testing accuracy. To make it fair, we only involve this trick for our implementations in the experiment on Reddit.
To get the best hyper-parameters on different datasets, we adopts grid-search for each model on and report the case who has the best validation accuracy in Table~\ref{tab:hyper}.

\begin{table}[t]
\small
\centering
\caption{Overview of the Four Datasets} \label{tab:hyper}
\begin{tabular}{cccc}
\toprule
\textbf{Dataset}&\textbf{Model}& \textbf{Accuracy}&\textbf{Hyper-parameters}\\
\midrule
{Cora} & DeGNN(Dense) & 84.3 &lr:0.01, hidden size:128, nlayers:5,\\& & & $K$: [3,3,2,2], metis:40, dropout:0.9, weight-decay:5e-4 \\
\midrule
Citeseer& DeGNN(JK) & 73.1&lr:0.01, hidden size:64, nlayers:4,\\& & & $K$: [3,2,2], metis:100, dropout:0.9, weight-decay:5e-4\\
\midrule
Pubmed& DeGNN(Dense) & 80.1 &lr:0.01, hidden size:128, nlayers:4,\\& & & $K$: [3,3,2], metis:100, dropout:0.85, weight-decay:5e-4\\
\midrule
Flickr& DeGNN(JK) & 52.5 & lr:0.008, hidden size:128, nlayers:4,\\& & & $K$: [3,3,2], metis:100, dropout:0.5, weight-decay:5e-4 \\
\midrule
Reddit& DeGNN(Dense) & 96.7 & lr:0.01, hidden size:256, nlayers:3,\\& & & $K$: [3,2], metis:100, dropout:0.2, weight-decay:5e-4,\\& & & with self feature modeling  \\
\bottomrule
\end{tabular}
\end{table}

\section{Further Experiments}
\subsection{Model Analysis on the graph connectivity}
\label{apdx:connect}
\begin{wrapfigure}{r}{0.45\textwidth}
\centering
\includegraphics[width=1.0\linewidth]{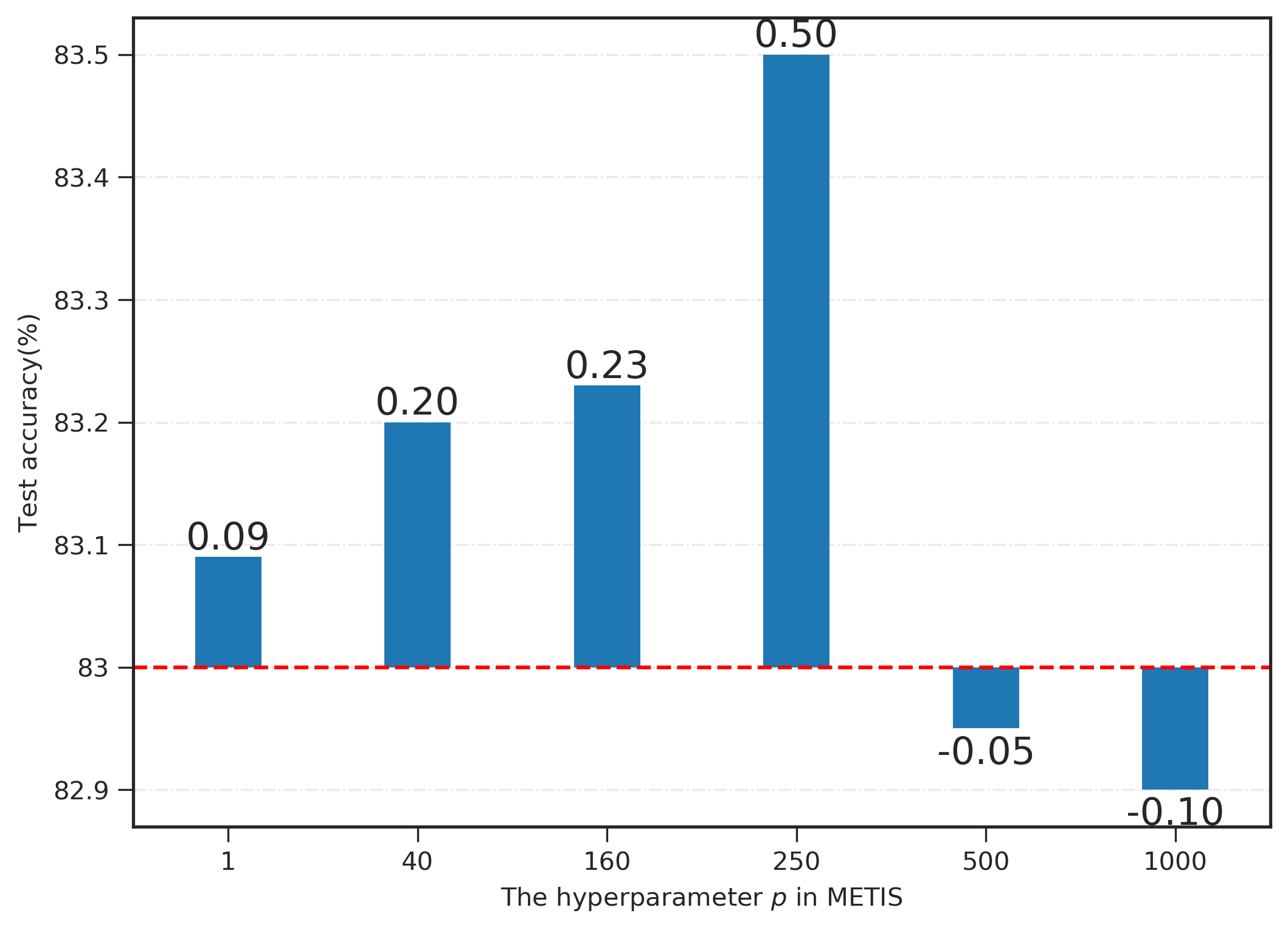}
\caption{The testing accuracy (\%) on Cora with different $p$.}\label{fig:metis}
\vspace{-1em}
\end{wrapfigure}
The proposed spanning forest based graph decomposition can control the graph connectivity with the METIS partition step. 
For simplicity, we evaluate on a standard two-layer GCN on Cora, and only the first layer is replace by a decomposed GCN with $K=4$. Other hyperparameters are selected from the grid search. We tune the parameter $p$ in METIS and get different connected components to generate different sizes of spanning tree. Then we test its influence on the final testing accuracy. 
As shown in Figure~\ref{fig:metis}, as $p$ increases, the testing accuracy improves at first, but drops down quickly at last. This is because the METIS eliminate too many edges cuts and result in a loss of graph connectivity.
In the above experiments, we can confirm that the graph decomposition do contribute to better performance to some extent, and proper connectivity is also significant to achieve great performance. There exists a trade-off between graph decomposition and graph connectivity.

\subsection{Model Analysis on the mode depth}
\label{apdx:depth}
Table~\ref{tab:depth} shows a detailed version of the influence of model depth for different models on the three citation datasets.

\begin{table}[t]
\small
\centering
\caption{Testing accuracy (\%) comparisons on different models w and w/o DeGNN} \label{tab:depth}
\begin{tabular}{cc|cc|cc|cc}
\toprule
 & &\multicolumn{2}{c|}{\textbf{4 layers}}&\multicolumn{2}{c|}{\textbf{6 layers}}&\multicolumn{2}{c}{\textbf{8 layers}}\\
\textbf{Dataset}&\textbf{Model} & \textbf{Original} & \textbf{DeGNN} & \textbf{Original} & \textbf{DeGNN} & \textbf{Original} & \textbf{DeGNN}\\
\midrule
\multirow{4}{*}{Cora} 
& GCN  & 80.2 & \textbf{82.8} & 74.3 & \textbf{80.5} & 59.4 & \textbf{75.4}\\
& ResGCN  & 81.2 &\textbf{83.7} & 80.7 & \textbf{83.4} & 80.5 & \textbf{82.4}\\
& JK-Net  &81.2 & \textbf{83.6} & 81.8 & \textbf{\underline{83.9}} & 81.6 & \textbf{\underline{83.7}}\\
& DenseGCN  & 82.1 & \textbf{\underline{84.0}} & 81.5 & \textbf{83.5} & 81.3 & \textbf{83.3}\\
\midrule
\multirow{4}{*}{Citeseer} 
& GCN  & 63.8 & \textbf{72.3} & 62.2 & \textbf{70.3} & 47.4 & \textbf{64.4}\\
& ResGCN  & 70.1 & \textbf{72.4} & 70.0 & \textbf{71.8} & 69.6 & \textbf{71.8}\\
& JK-Net  &70.5 & \textbf{\underline{73.1}} & 70.3 & \textbf{\underline{72.8}} & 70.6 & \textbf{72.7}\\
& DenseGCN  & 71.1 & \textbf{72.5} & 70.7 & \textbf{72.5} & 70.6 & \textbf{\underline{72.8}}\\
\midrule
\multirow{4}{*}{Pubmed} 
& GCN  & 74.4 & \textbf{79.1} & 72.7 & \textbf{77.4} & 68.1 & \textbf{76.1}\\
& ResGCN  & 78.3 & \textbf{79.5} & 78.0 & \textbf{79.6} & 77.9 & \textbf{79.4}\\
& JK-Net  &78.8 & \textbf{80.0} & 78.6 & \textbf{\underline{79.8}} & 78.5 & \textbf{\underline{79.6}}\\
& DenseGCN  & 78.9 & \textbf{\underline{80.1}} & 79.0 & \textbf{79.4} & 79.0 & \textbf{79.2}\\
\bottomrule
\end{tabular}
\end{table}

\section{Complexity Analysis}
The time complexity of original GCN (comes from the sparse-dense matrix multiplications) is $\mathcal{O}(LMF+LNF^2)$, where $L$ is the number of layers, $N$ is the number of nodes, $F$ is the number of features and $M$ is the number of edges, i.e., nonzero elements in adjacency matrix $A$. The graph decomposition consists of the METIS step ($\mathcal{O}(N+M+p\log(p))$), the spanning forest generation (e.g., DFS $\mathcal{O}(N+M)$) and the node decomposition strategies ($\mathcal{O}(M)$). Therefore, the major computation of DeGNN is nearly the same with GCN asymptotically.

\end{document}